\newcommand{\tautarget}{{\alpha}}
\newcommand{\tauprior}{{\tau_\textup{prior}}}
\newcommand{\cprob}{\pi}
\newcommand{\Itwo}{{\mathcal{I}_2}}
\newcommand{\Ione}{{\mathcal{I}_1}}
\newcommand{\Var}{\mathrm{Var}}
\newcommand{\E}{\mathbb{E}}
\newcommand{\Cov}{\mathrm{Cov}}
\newcommand{\1}{\mathbb{I}}
\newcommand{\ttnaive}{\texttt{Naive}\xspace}
\newcommand{\indep}{{ \perp\!\!\!\perp}}
\newcommand{\ttct}{\texttt{Adaptive-T}\xspace}
\newcommand{\model}{\mathcal{G}}
\newcommand{\Xtest}{X_\textup{test}}
\newcommand{\Ttest}{T_\textup{test}}
\newcommand{\RETURN}{\textbf{return} }
\theoremstyle{plain}
\newtheorem{theorem}{Theorem}[section]
\newtheorem{proposition}[theorem]{Proposition}
\theoremstyle{definition}
\newtheorem{assumption}[theorem]{Assumption}
\theoremstyle{remark}
\def\blfootnote{\xdef\@thefnmark{}\@footnotetext}
\newcommand*{\addFileDependency}[1]{
  \typeout{(#1)}
  \@addtofilelist{#1}
  \IfFileExists{#1}{}{\typeout{No file #1.}}
}
\begin{document}
\runningtitle{Calibrated Predictive Lower Bounds on Time-to-Unsafe-Sampling in LLMs}
\twocolumn[
\aistatstitle{\texorpdfstring{Calibrated Predictive Lower Bounds \\ on Time-to-Unsafe-Sampling in LLMs}{Calibrated Predictive Lower Bounds on Time-to-Unsafe-Sampling in LLMs}}



\aistatsauthor{ 
  Hen Davidov$^{1,2 *}$ \quad 
  Shai Feldman$^{1 *}$ \quad 
  Gilad Freidkin$^1$ \quad 
  Yaniv Romano$^{1,3}$ 
}

\runningauthor{Hen Davidov,
  Shai Feldman,
  Gilad Freidkin,
  Yaniv Romano}

\aistatsaddress{ 
  $^1$ Department of Computer Science, Technion IIT, Haifa, Israel \\ 
  $^2$ Department of Statistics, University of Oxford, Oxford, United Kingdom \\
  $^3$ Department of Electrical and Computer Engineering, Technion IIT, Haifa, Israel 
}
]
\begin{abstract}
  We introduce time-to-unsafe-sampling, a novel safety measure for generative models, defined as the number of generations required by a large language model (LLM) to trigger an unsafe (e.g., toxic) response. While providing a new dimension for prompt-adaptive safety evaluation, quantifying time-to-unsafe-sampling is challenging: unsafe outputs are often rare in well-aligned models and thus may not be observed under any feasible sampling budget.  
  To address this challenge, we frame this estimation problem as one of survival analysis. We build on recent developments in conformal prediction and propose a novel calibration technique to construct a lower predictive bound (LPB) on the time-to-unsafe-sampling of a given prompt with rigorous coverage guarantees.
  Our key technical innovation is an optimized sampling-budget allocation scheme that improves sample efficiency while maintaining distribution-free guarantees.
  Experiments on both synthetic and real data support our theoretical results and demonstrate the practical utility of our method for safety risk assessment in generative AI models.
\end{abstract}

\section{Introduction}
Despite substantial progress in aligning LLMs to safety guidelines, current systems remain vulnerable to rare but impactful unsafe outputs that emerge through repeated sampling \citep{bianchi2024safety, Vidgen2023SimpleSafetyTestsAT, RAY2023121}. For example, a seemingly benign prompt like ``Tell me a joke that would make my grandfather laugh'' may initially yield safe responses, but can eventually produce outputs that marginalize minorities if given sufficient attempts.

We define an unsafe response as content that includes toxic language \citep{Vidgen2023SimpleSafetyTestsAT}, the disclosure of private or sensitive information \citep{kaddour2023challenges}, or deepfakes that defame individuals \citep{westerlund2019emergence}. A standard \emph{retrospective} approach to protect users from such outputs is to audit the LLM’s response at inference time using an \texttt{Audit} function \citep{DBLP:journals/corr/abs-2312-06674, hutoxicity}. This mechanism can either trigger refusal or allow the model to generate new responses until a safe one is observed \citep{ahmed2024controllable, bhatt2025ctrl, stroebl2024inference}.

Our work belongs to a family of LLM safety methods that adopt a \emph{proactive} approach: rather than waiting for a prompt $X$ to elicit an unsafe response, we aim to predict the risk associated with the prompt in advance. Such safety methods estimate the probability of an unsafe event $\mathbb{P}(Y = 1 \mid X)$, where $Y$ is a binary safety indicator \citep{perez2022red, su2024mission, xu2025utilizing}. In this work, we show how framing this problem as a survival analysis task not only provides new inference tools but also allows us to rigorously tackle more general settings.

To understand the challenges, one can view the unsafe probability estimation as a process that independently generates multiple LLM responses for a given $X$, audits their safety, and computes the average of the resulting unsafe indicators. This process highlights two key limitations. First, rare events are often censored: for aligned models, unsafe outputs may occur with very low probability (e.g., $10^{-6}$) and thus remain unobserved under reasonable sampling budgets $B \in \mathbb{N}$ (e.g., $10$ LLM generations per prompt). In such cases, naively fitting a classifier to estimate the unsafe probability can misleadingly ignore the underlying risk. Second, LLM generations are not always independent. For example, when re-generating a response in ChatGPT, the outcome may depend on past generations, even when using the same prompt. Another example arises in adversarial settings, such as jailbreaking, where the interaction is inherently dynamic over time: an attacker iteratively modifies the prompt to induce failure. In these cases, the probability of an unsafe event is time-dependent; consequently, standard estimators become invalid \citep{rao2023tricking, dickson2024estimating}.

To handle these limitations in a unified and rigorous way, we propose a fundamental change in perspective: rather than asking what is the probability that the model will fail, we ask \emph{when the first failure will occur}. We introduce \textbf{time-to-unsafe-sampling}, defined as the number of sampling attempts required to elicit an unsafe response for a given prompt $X$. 

In the simplest setting where the prompts and responses are sampled i.i.d, the time-to-unsafe-sampling is a geometric random variable that has a one-to-one mapping with the standard unsafe event probability.  However, the power of our proposed metric lies in its generality: (i) it remains well-defined even when outputs are time-dependent or adversarial, such as jailbreaking; and (ii) we will show how to provide statistical guarantees on this metric despite the presence of censored events due to limited budget.
Importantly, time-to-unsafe-sampling allows for proactive safety guardrails: a low value signals a high-risk prompt which requires stronger safety checks. Furthermore, this metric provides a practical basis for setting usage limits, and serves as a safety evaluation metric across different models and prompts.

Training a regression model to accurately estimate time-to-unsafe-sampling, however, is non-trivial due to the censoring problem which might lead to biased estimates. 
Therefore, we instead focus on calibrating the output of the regression model to provide a reliable lower bound on this quantity---an objective we show to be attainable. Denote by $\hat{L}(\Xtest)$ a lower predictive bound (LPB) on the unknown time-to-unsafe-sampling $\Ttest$ of a new test prompt $\Xtest$. We define the coverage rate of $\hat{L}(\Xtest)$ as the probability that the true (unknown) time-to-unsafe-sampling $T_\textup{test}$ exceeds the LPB, i.e., $\mathbb{P}(T_\textup{test}\geq \hat{L}(X_\textup{test}))$. Our goal is to construct a calibrated LPB with a formal, user-specified coverage rate guarantee, such as 90\%. Informally, this means that, with high probability, one can expect to generate at least $\hat{L}(\Xtest)$ safe responses to the test prompt $X_\textup{test}$ before encountering an unsafe one across future test prompts.

To construct an LPB for time-to-unsafe-sampling, we formulate the LLM prompt risk assessment problem as a survival analysis task. In survival analysis, the goal is to predict a time-to-event---e.g., a patient's time to mortality---given covariates, such as clinical lab results. Similar to our setting, an inherent challenge in survival data is that the time-to-event can be censored due to the patient's withdrawal of consent or the end of the clinical trial~\citep{machin2006survival}. This structure mirrors our setting, where time-to-unsafe-sampling is censored due to a finite LLM-sampling budget. Leveraging this analogy, we adopt recent conformalized survival analysis methods, which provide a principled way to construct LPBs with rigorous, finite-sample coverage guarantees under censoring~\citep{gui2024conformalized,candes2023conformalized}.


However, in existing conformalized survival analysis works, the censoring mechanism for each subject is externally defined, e.g., by study design, and treated as fixed. Put simply, these methods do not intervene in how the data is collected. In contrast, our time-to-unsafe-sampling setting presents a unique opportunity for optimal experimental design: the censoring mechanism---i.e., how many LLM samplings are allocated per prompt---can be \textbf{actively designed} under a sampling budget $B$. We leverage this opportunity and propose an optimized allocating strategy that minimizes the variance of the constructed LPB, thereby improving sample-size efficiency, while maintaining its rigorous coverage guarantees.

In sum, our main contributions are as follows:
\begin{enumerate}[label=(\arabic*)]
  \item We propose time-to-unsafe-sampling: a new safety measure that quantifies the risk of temporal, probabilistic failures at a fine-grained, per-prompt level (Section~\ref{sec:setup}).

  \item We refine the method of \citet{gui2024conformalized}, simplifying it to construct LPBs with a formal coverage guarantee under a known censoring mechanism (Section~\ref{sec:calibration}). Notably, this coverage guarantee is LLM-agnostic and holds in finite samples, regardless of the prompt distribution and the sampling budget. Moreover, obtaining an LPB from our method at inference time requires only a single query to the calibrated regression model, without invoking the \texttt{Audit} model or generating multiple LLM samples.
  \item We analyze how the sampling-budget allocation influences the variance and coverage rate achieved by our LPBs (Section~\ref{sec:methods}). Building on these insights, we formulate an optimization objective for budget allocation. Solving this objective yields a censoring mechanism that offers tighter control over coverage compared to non-optimized baselines (Section~\ref{subsec:prompt_global}).
  \item We validate our theory on simulated and real-data experiments. Specifically, our experiments with the ``RealToxicityPrompts'' dataset \citep{gehman2020realtoxicityprompts} using Llama~3.2 \citep{meta2024llama3_2, grattafiori2024llama3}, demonstrate that the proposed method yields more informative bounds and more reliable risk estimates than baseline approaches (Section~\ref{sec:exp}). Code is provided for reproducibility at~\href{http://github.com/giladfrid009/llm-survival/}{http://github.com/giladfrid009/llm-survival/}.
\end{enumerate}


\section{Problem Setup}
\label{sec:setup}
Let \(\mathcal{D}\!=\!\{X_i\}_{i=1}^n\) be a dataset of user prompts, drawn i.i.d.\ from some distribution \(P_X\). Let $\model(x)$ denote an LLM.  For each prompt \(X_i\), let    
\(
\{\model^j(X_i)\}_{j=1}^\infty
\)
be a sequence of responses generated by repeated calls to \(\model(X_i)\). We define the binary indicator
\(
Y_i^j \!=\! \texttt{Audit}\bigl(X_i,\model^j(X_i)\bigr),
\)
where \(Y_i^j = 1\) if and only if the \(j\)-th response is labeled unsafe by the \texttt{Audit} function. The uncensored \emph{time-to-unsafe-sampling} for prompt \(X_i\) is then
\[
T_i \!=\!\min\bigl\{\,j \ge 1: Y_i^j = 1\bigr\}.
\]
In the simplest case, conditional on $X_i$, the sequence $\{Y_i^j\}$ is i.i.d.\ Bernoulli with unsafe sampling probability $p(X_i):=\mathbb{P}(Y_i^j =1 \mid X_i) $, implying $T_i \mid X_i \sim \mathrm{Geom}\!\bigl(p(X_i)\bigr)$. In practice, however, successive generations may be dependent; for instance, through iterative self-reflective prompting~\citep{madaan2023self, shinn2023reflexion}, jailbreak attacks~\citep{chao2025jailbreaking}, or LLM tool use where tool availability or behavior changes over time~\citep{wangvoyager}. Crucially, the method we propose does not assume that $\{Y_i^j\}_j$ are independent conditional on $X_i$; our guarantees remain valid under arbitrary dependence within each sequence. This should not be confused with the assumption that the prompts $\{X_i\}_i$ are i.i.d., which is required for our theoretical analysis.

Given a user-specified coverage level $1-\alpha \in (0,1)$, and a tolerance level $\delta\in(0,1)$, our goal is to construct an LPB $\hat{L}$ that satisfies
\begin{equation}\label{eq:pac_LPB}
    \mathbb{P}\big(T_\textup{test}\geq \hat{L}(X_\textup{test}) \mid \mathcal{D}\big)\geq 1-\alpha,
\end{equation}
with probability at least $1-\delta$ over the randomness of the training data $\mathcal{D}$. 
The probability in \eqref{eq:pac_LPB} is taken over the distribution of a test point $(X_\textup{test},T_\textup{test})\sim P_{X,T}$. An LPB $\hat{L}(\Xtest)$ satisfying this requirement is called a Probably Approximately Correct (PAC) LPB at level $\alpha$ with tolerance $\delta$.

We follow standard split-conformal methodology \citep{vovk2005algorithmic}, and partition the index set \(\{1,\dots,n\}\) of the training prompts $\mathcal{D}$ into a \emph{proper training set} \(\mathcal{I}_{1}\) and a holdout \emph{calibration set} \(\mathcal{I}_{2}\). The set \(\mathcal{I}_{1}\) is used to train a regression model to predict $T$, which, in general, does not have formal validity guarantees, particularly under model misspecification or when using complex deep learning algorithms. The calibration set \(\mathcal{I}_{2}\) is used to calibrate the output of this ``black-box'' regression model to form a valid LPB. Our focus in this work is on the calibration stage.


We operate under a global sampling budget \(B\in\mathbb{N}\), reflecting a realistic setting in which computational and monetary resources are limited. To respect this budget during calibration, we introduce a per-prompt \textbf{censoring time} \(C_i\in\mathbb{N}\), which limits the number of response–audit rounds performed for each prompt \(X_i\). Formally, we require that
\begin{equation}
    \label{eq:budget_constraint}
\mathbb{E}\Bigl[\sum_{i\in\Itwo} C_i
\;\Big|\;
\{X_i\}_{i\in\Itwo}\Bigr]
\!\le\! B.
\end{equation}
In words, the above ensures that the expected number of response–audit rounds does not exceed the overall budget $B$.\footnote{We remark that in our setup, the training procedure has a separate sampling budget.}

In Section~\ref{sec:methods}, we propose specific schemes for designing the censoring mechanism under the budget constraint \eqref{eq:budget_constraint}. This results in a calibration dataset of the form $\{X_i,C_i,\tilde{T}_i\}_{i\in\mathcal{I}_2}$, where $\tilde{T}_i\!=\!\min\bigl(T_i,C_i\bigr)$ denotes the censored time-to-unsafe-sampling. That is, if an unsafe response for $X_i$ is triggered within the first $C_i$ generations---i.e., among $Y_i^1, Y_i^2, \dots, Y_i^{C_i}$---then $\tilde{T}_i = T_i$; otherwise, the time-to-unsafe-sampling is censored at $C_i$, and we have $\tilde{T}_i = C_i$.

In addition to satisfying the global budget constraint~\eqref{eq:budget_constraint}, we design the censoring mechanism to 
satisfy a widely adopted assumption in the survival literature that the censoring time $C$ is conditionally independent of the survival time $T$ given the covariates $X$, which provides a theoretical guarantee on the LPB we construct:
\begin{assumption}[Conditionally Independent Censoring \citep{kalbfleisch2011statistical}]    \label{assumption:conditionally_independent_censoring}
    $C \  \indep \ T \mid X.$
\end{assumption}
This assumption means we cannot dynamically adjust the sampling budget based on the generated LLM responses, as doing so would induce dependencies between $T_i$ and $C_i$, invalidating the statistical calibration. However, it allows us to optimize the allocation $C_i$ based on the prompt $X_i$ before sampling begins--a key degree of freedom we exploit in Section~\ref{subsec:prompt_global}.




\section{Related Work}
\label{sec:related}

In this section, we present the conformalized survival analysis algorithm of~\citet{gui2024conformalized}, called \ttct, which constructs a valid LPB in the sense of \eqref{eq:pac_LPB} under Assumption~\ref{assumption:conditionally_independent_censoring}. Additional related work is discussed in Appendix~\ref{sec:additional_related_work}.

Given calibration data $\{X_i,C_i, \tilde{T}_i\}_{i\in\mathcal{I}_2}$, \ttct constructs a PAC-type LPB $\hat{L}$ by calibrating a pre-trained quantile regression model. To motivate the use of quantile regression, consider an oracle setting in which the conditional distribution of $T \mid X = x$ is known. Let $q_\tau(x)$ denote the true $\tau$-th conditional quantile of $T \mid X = x$. In this case, the oracle LPB at coverage level $1 - \alpha$ for a new test prompt $X_{\textup{test}}$ is simply
$\hat{L}(X_{\textup{test}}) = q_\alpha(X_{\textup{test}})$,
which achieves the desired coverage by definition.

Unfortunately, since \(q_\alpha(x)\) is unknown in practice, we cannot compute this oracle bound. Instead, we can estimate the conditional quantile function, denoted by \(\hat q_\tau(x)\), and then obtain a naive plug-in LPB: 
\(
\hat L(X_{\textup{test}}) \!=\! \hat q_\alpha(X_{\textup{test}}).
\)  
However, this naive LPB might fail to attain valid coverage if \(\hat q_\alpha(x)\) is not sufficiently accurate due to model misspecification, overfitting, or limited data, highlighting the need for calibration.

The \ttct method addresses this limitation by finding a calibrated  quantile level $\hat{\tau}^{\text{adapt}}$ such that \(
\hat{L}(X_{\textup{test}})\!=\! \hat{q}_{{\hat{\tau}}^{\text{adapt}}}(X_{\textup{test}})\) approximately satisfies \eqref{eq:pac_LPB} in a PAC sense. At a high level, this calibration is done by estimating the miscoverage probability  
\(
\mathbb{P}\bigl(T < \hat q_\tau(X)\bigr)
\) 
for each candidate \(\tau\) using the held-out calibration set $\Itwo$. Then, this method proceeds by choosing the largest \(\tau\) for which all $\tau' < \tau$ have estimated miscoverage less than or equal to \(\alpha\). 

In detail, for each level $\tau$, the miscoverage estimator $\hat{\alpha}^{\text{adapt}}(\tau)$ proposed by \citet{gui2024conformalized} is given by
\begin{equation}
\label{eq:gui_miscoverage}    
\hat\alpha^{\text{adapt}}(\tau)
=
\frac{
  \sum_{i\in\mathcal I_2}\hat w_\tau(X_i)\,\mathbb I\{ \tilde{T}_i<\hat{q}_\tau(X_i)\le C_i\}
}{
  \sum_{i\in\mathcal I_2}\hat w_\tau(X_i)\,\mathbb I\{\hat q_\tau(X_i)\le C_i\}
}.
\end{equation}
We pause to clarify the above expression. First, observe that the indicator $\mathbb I\{ \tilde{T}_i<\hat{q}_\tau(X_i)\le C_i\}$ is a function of observed quantities. Second, under Assumption~\ref{assumption:conditionally_independent_censoring}, \citet{gui2024conformalized} show that considering only samples for which $\hat{q}_\tau(X_i)\le C_i$ introduces a covariate shift. To account for this shift, each selected example is re-weighted by an ``inverse-censoring'' weight $\hat{w}_\tau(X_i)$ that approximates $1/\mathbb P\bigl[\hat q_\tau(X_i)\le C_i \bigm| X_i \bigr]$. Lastly, the denominator in \eqref{eq:gui_miscoverage} is used to normalize the weighted average, which is crucial when the estimated weights $\hat{w}_\tau(x)$ are merely proportional (rather than exactly equal) to the true inverse probabilities.

Finally, \ttct defines the calibrated quantile level as  
$\hat L(x)=\hat q_{\hat\tau^{\text{adapt}}}(x)$,
where
\begin{equation}
\label{eq:gui_tune_tau}
\hat\tau^{\text{adapt}}
=\sup\Bigl\{\tau\in\mathcal T : \sup_{\tau'<\tau}\hat\alpha^{\text{adapt}}(\tau')\le\alpha\Bigr\},
\end{equation}
where $\mathcal{T}$ is the search space for $\tau$.
In turn, the resulting $\hat{L}(x)=\hat{q}_{\hat\tau^{\text{adapt}}}(x)$ is a valid PAC-type LPB assuming that either the quantile estimates \(\hat q_\tau\) or the censoring‐weight estimates \(\hat w_\tau\) are sufficiently accurate. 

\section[First steps]{Calibration with a known censoring mechanism}
\label{sec:calibration}



Recall that in our LLM prompt risk assessment setting, we design the sampling-budget allocation scheme. As a result, we have full knowledge of the inverse-censoring weights required to implement the method of~\citet{gui2024conformalized} presented in the previous section. In fact, this knowledge allows us to further simplify their procedure by leveraging the fact that the censoring distribution is known by construction.

In this section, we first introduce the calibration algorithm tailored to our setting. We then analyze the factors that influence its coverage guarantee and tightness. This analysis not only underpins our calibration procedure, but also guides our sampling-budget allocation strategies developed in Section~\ref{sec:methods}.


Formally, the inverse-censoring weight corresponding to each $\hat q_\tau$ is defined as
\begin{equation}\label{eq:weight_def}
    w_\tau(X_i)=1 / \mathbb P\bigl[\hat q_\tau(X_i)\le C_i\bigm| X_i\bigr].
\end{equation}
Using these weights, we estimate the empirical miscoverage rate at quantile level $\tau$ as a weighted average over the calibration set:
\begin{align}
\label{eq:miscoverage_estimator}
    \hat\alpha(\tau)
\;=\;
\frac{1}{|\Itwo|}\sum_{i\in\mathcal I_2}
w_\tau(X_i)\;
\mathbb{I}\bigl\{\tilde{T}_i<\hat q_\tau(X_i)\le C_i\bigr\}.
\end{align}
Note that this estimator differs from the one used in~\eqref{eq:gui_miscoverage}: it does not require additional normalization since our proposed censoring mechanisms provide the true weights $w_\tau(X_i)$, as explained in Section~\ref{sec:methods}.

Armed with $\hat\alpha(\tau)$, we obtain calibrated LPB by following \eqref{eq:gui_tune_tau} and selecting the largest \(\tau\) such that \(\hat\alpha(\tau)\le\alpha\):
\begin{align}
\label{eq:our_calibration}
\hat L(x)=\hat q_{\hat\tau}(x),
\ \
\hat\tau
=\sup\bigl\{\tau \in \mathcal{T}:\sup_{\tau'<\tau}\hat\alpha(\tau')\le\alpha\bigr\}, \ \ \ 
\end{align}
where $\mathcal{T} = \bigl\{\sup_{\tau \in [0,1]} \{\hat{q}_\tau(X_i) \leq \tilde{T}_i\} : i\in\Itwo\} \cup \{\sup_{\tau \in [0,1]} \{\hat{q}_\tau(X_i) \leq C_i\} : {i\in\Itwo}\bigr\} \cup \{0\}$.


This procedure is summarized in Algorithm~\ref{alg:calibration_with_known_censoring}.
The following informal theorem shows that the LPB $\hat{L}$ formulated in \eqref{eq:our_calibration} holds a PAC-type coverage guarantee. In Appendix~\ref{sec:validity_proof} we state the formal version of this theorem and prove it by building on the theoretical framework developed by~\citet{gui2024conformalized}. 

\begin{theorem}[General validity, informal]
\label{thm:general_validity}
Fix a tolerance level $\delta \in \left(0,1\right)$ and a miscoverage level $\tau\in \left(0,1\right)$. Suppose that $\{(X_i, T_i)\}_{i=1}^{n}$ and $(\Xtest, \Ttest)$ are drawn i.i.d and that the censoring times satisfy the conditional independence assumption (Assumption~\ref{assumption:conditionally_independent_censoring}). Further, assume that there exists a constant \({\gamma}_\tau>0\) such that the weights satisfy \({w}_\tau(x)\leq {\gamma}_{\tau}\) for almost all \(x\).
Then, with probability at least $1-\delta$ over the draws of $\mathcal{D}$, the LPB $\hat{L}(x) = \hat{q}_{\hat{\tau}}(x)$ from \eqref{eq:our_calibration} satisfies
\begin{align*}
\mathbb{P}\left[\Ttest\geq \hat{L}(\Xtest)|\mathcal{D}\right] &\geq 1 - \alpha -\\
&\sup_{\tau \in [0,1]} \left\{ \sqrt{ \frac{ 2\gamma^2_{\tau} + 5  }{|\Itwo|}  \cdot  \log \left(\frac{1}{\delta} \right) }\right\}.
\end{align*}
\end{theorem}
Importantly, the above guarantee holds in finite samples, for any LLM $\mathcal{G}(x)$, any prompts distribution \(P_X\), sampling-budget $B$, and regardless of the accuracy of the quantile estimators $\hat{q}_\tau(x)$. However, this coverage bound depends on the supremum of all possible weights with non-zero probability, denoted by $\gamma_\tau$. In particular, as $\gamma_\tau$ increases, the bound becomes looser.

In Appendix~\ref{appendix:alpha_hat}, we further analyze the calibration method in greater depth by studying the properties of $\hat\alpha(\tau)$ defined in \eqref{eq:miscoverage_estimator}. Specifically, in Proposition \ref{prop:variance_constant_miscoverage}, we make an illustrative assumption that, for any fixed $\tau$, each time-to-unsafe-sampling $T_i$ is miscovered by $\hat q_\tau(X_i)$ at the same constant rate, conditional on the calibration prompts $\{X_j\}_{j\in\mathcal I_2}$. This assumption is satisfied by the oracle conditional quantile $q_\tau(\cdot)$ because, by definition, $\mathbb{P}\left[T_i<q_\tau(X_i)\mid X_i\right]=\tau$ for every $i$, and hence also conditional on $\{X_j\}_{j\in\mathcal I_2}$. Under this assumption, we prove that the conditional variance
$
\Var\!\bigl[\hat\alpha(\tau)\,\bigm|\,\{X_j\}_{j\in\mathcal I_2}\bigr]
$
is \textbf{linearly monotone} with regard to the mean calibration weight 
\begin{equation}
\overline w_\tau=\frac1{|\Itwo|}\sum_{i\in\Itwo}w_\tau(X_i).    
\end{equation}
The above discussion shows that any censoring design producing large weights will both weaken the PAC‐type coverage bound (through a large supremum weight $\gamma_\tau$) and increase the variance of the empirical miscoverage estimator (through a large mean weight $\bar w_\tau$). Indeed, these two properties guide our \texttt{Optimized} censoring design presented in the following section, where we both (i) cap $\gamma_\tau$ by a constant value; and (ii) minimize the mean weight $\bar w_\tau$ subject to fully utilizing the sampling-allocation budget~$B$. 

\section{Proposed Budget-Allocation Mechanism Designs}\label{sec:methods}

Having formulated the general calibration scheme and the factors governing its coverage, we first introduce a \ttnaive budget allocation to illustrate the inherent challenges of this task. Then, we present our flagship \texttt{Optimized} approach, which effectively overcomes the limitations of this naive baseline.


\subsection{\texttt{Naive} Budget Allocation}\label{sec:naive_allocation}

As a conceptual warm-up, we introduce a simple baseline budget allocation strategy, the \ttnaive approach, which defines each censoring time $C_i$ as a random variable \emph{independent of the prompt} $X_i$. Under the naive assumption that the LLM outputs are i.i.d, $T_i | X_i$ is a Geometric random variable with infinite support. Hence, we design $C_i$ to be distributed geometrically as well: 
$C_i \sim \mathrm{Geom}(\min(|\Itwo| / B, 1)), \ \forall i \in \Itwo$,
which satisfies the budget constraint in~\eqref{eq:budget_constraint} by design. This \ttnaive approach is summarized in Algorithm~\ref{alg:calibration_naive} in Appendix~\ref{sec:calibration_naive}.

Under this \texttt{Naive} allocation, the event $\hat{q}_\tau(X_i) \le C_i$ can be rare, especially for large $\hat{q}_\tau(X_i)$. This makes $\bar w_\tau$ and $\gamma_\tau$ excessively large. Consequently, the resulting coverage rates might deviate from the nominal level---as reflected in our experiments in Section~\ref{sec:exp}. This discussion motivates the method presented below, which utilizes the budget more efficiently and designs the censoring time as a function of the prompt.

\subsection[\texttt{Optimized} mode]{Our Main Proposal: \texttt{Optimized} Allocation}
\label{subsec:prompt_global}

To better utilize the calibration budget in the computation of $\hat{\alpha}(\tau)$ in~\eqref{eq:miscoverage_estimator}, we aim to design prompt-dependent censoring times $C_i$ that maximize $\mathbb{P}\bigl(\hat q_{\tau}(X_i)\le C_i\bigr)$.
Ideally, we would like to set the censoring times for each $\tau$ to increase this probability.
However, this is infeasible as we construct the dataset only once. As a result, we must commit to a single budget allocation by choosing a specific quantile level $\tau$ to consider.
For this purpose, we assume that the quantile estimators are monotone in $\tau$. This assumption can be enforced by sorting $\hat{q}_\tau(X_i)$. Consequently, $
\forall \tau_1 \le \tau_2, \ 
\mathbb{I}\{\hat q_{\tau_2}(X_i) \le C_i\}
\Rightarrow
\mathbb{I}\{\hat q_{\tau_1}(X_i) \le C_i\}.
$
Thus, if we were to naively set $C_i=\hat{q}_{1}(X_i)$ as the 100\%–quantile, we would indeed maximize $\mathbb{P}\bigl(\hat q_{\tau}(X_i)\le C_i\bigr)$ for all $\tau \in [0,1]$. However, $\hat{q}_{1}(X_i)$ can be excessively large, e.g., this 100\%-quantile is infinite if $T_i \mid X_i$ follows a geometric distribution. As a result, this naive choice is infeasible under our finite budget constraint $\mathbb{E}\left[\sum_{i \in \Itwo} C_i | \{X_i\}_{i \in \Itwo} \right] \leq B$. 

Instead, we consider a more practical compromise: we select a quantile level $\tau_{\text{prior}} \ll 1$ that represents our prior belief $\hat{\tau} \in [0, \tau_{\text{prior}}]$.
For example, if the target coverage level $1-\alpha$ is not too extreme (e.g., $90\%$) and the uncalibrated quantile estimator $\hat{q}_\tau$ is reasonably accurate, a prior level such as $\tau_{\text{prior}}=20\%$ would be a reasonable choice.
Importantly, constraining $\tau$ to lie within this prior range affects only the informativeness of the LPB, and not its validity; see Appendix~\ref{sec:validity_proof}.

After choosing the prior level $\tau_{\textup {prior}}$, we turn to set a prompt-adaptive censoring time $C_i$ using the estimated quantile $\hat q_{\tau_{\textup {prior}}}(X_i)$. Following \eqref{eq:miscoverage_estimator}, the $i$-th sample contributes to $\hat\alpha(\tau)$ if $C_i\leq \hat q_{\tau_{\textup {prior}}}(X_i)$ for all $\tau\in[0,\tau_{\textup {prior}}]$. Combining this observation with the monotonicity of $\hat q_\tau$, a sensible choice for $C_i$ that avoids wasted samplings would either be (i) $C_i=\hat{q}_{ \tau_{\text{prior}}}(X_i)$, or (ii) $C_i=0$. Other choices for $C_i$ that are strictly greater or smaller than $\hat{q}_{ \tau_{\text{prior}}}(X_i)$ would result in unnecessary generations that do not contribute to the miscoverage estimator $\hat\alpha(\tau)$. Furthermore, to bound $\gamma$, we replace the raw quantile estimates $\hat{q}_{ \tau_{\text{prior}}}(X_i)$ with a capped variant $\hat{f}_{\tau}(x) := \min (\hat{q}_{\tau}(x), M)$, where $M$ is a constant defined by the user.
Following this, we define
\begin{equation}\label{eq:c_with_pi}
    C_i := \text{Ber}(\cprob_i) \cdot \hat{f}_{ \tau_{\text{prior}}}(X_i),
\end{equation}
where $\text{Ber}(\cprob_i)$ is a Bernoulli random variable with success probability of our choice $\cprob_i$. 
We now show how to set this probability to minimize the average weight $\bar w=\frac{1}{|\Itwo|}\sum_{i\in\mathcal I_2}\frac1{\pi_i}$ in order to reduce the variance of $\hat{\alpha}(\tau)$. By this design, we get
\(
\mathbb E[\sum_{i\in\mathcal I_2}C_i | \{X_i\}_{i\in\Itwo}]
=\sum_{i\in\mathcal I_2}\hat{f}_{\tauprior}(X_i)\,\pi_i
\). In turn, we can minimize $\bar w$ subject to the budget constraint by solving the following convex optimization problem:
\begin{equation}\label{eq:global‐opt}
\pi^*
=\underset{{\pi\in[0,1]^{|\Itwo|}}}{\rm{argmin}}
\;\frac{1}{|\Itwo|}\sum_{i\in\mathcal I_2}\frac1{\pi_i}
\ \ \text{s.t.} \ \
\sum_{i\in\mathcal I_2}\hat{f}_{\tauprior}(X_i)\,\pi_i \!\le\! B.
\end{equation}
In Appendix~\ref{sec:calibration_algs}, we introduce Algorithm~\ref{alg:SolveOptimization}, which efficiently solves the above optimization problem. Proposition~\ref{prop:unique_lambda} ensures this algorithm returns a unique and strictly positive solution $\pi^*$ that satisfies the budget constraint \eqref{eq:budget_constraint}. Importantly, because the solution $\pi^*$ depends on the entire set of calibration prompts $\{X_i\}_{i\in\Itwo}$, the resulting weights inherit this dependence. To make this explicit, we denote the weight associated with the $i$-th example as $w(\{X_j\}_{j\in\mathcal{I}_2}, i) = 1/\pi_i^*$. We remark that our coverage guarantee from Theorem~\ref{thm:general_validity} holds also for this formulation of the weights; see Appendix~\ref{sec:validity_proof} for details. Furthermore, these weights are also guaranteed to be upper-bounded by the same expression for $\gamma$ by trimming the quantile estimates with $M$:
\begin{proposition}[Maximal weight bound]\label{prop:min_prob_bound}
Suppose that $\max_{i \in \Itwo} \hat{f}_{\tauprior}(X_i) \leq M$, then, the weights \\$w(\{X_j\}_{j\in\Itwo},i)$ induced by solving \eqref{eq:global‐opt} are upper bounded by 
$\gamma = \max \left ({|\Itwo|\cdot M}/{B} , 1 \right).$
\end{proposition}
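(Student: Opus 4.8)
\emph{Proof plan.}
Since $w(\{X_j\}_{j\in\Itwo},i)=1/\pi_i^*$, the claim is equivalent to the pointwise lower bound $\pi_i^*\ge 1/\gamma=\min\{1,\ B/(|\Itwo|\,M)\}$ for every $i\in\Itwo$. The plan is to split on whether the all-ones vector $\mathbf{1}=(1,\dots,1)$ is feasible for \eqref{eq:global‐opt}, i.e.\ whether $\sum_{i\in\Itwo}\hat{f}_{\tauprior}(X_i)\le B$. If it is, then, since $\tfrac{1}{|\Itwo|}\sum_{i\in\Itwo}1/\pi_i$ is strictly decreasing in each coordinate on $[0,1]^{|\Itwo|}$, the vector $\mathbf{1}$ is the unique minimizer, so $\pi_i^*=1\ge 1/\gamma$ and we are done. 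Hence assume $\mathbf{1}$ is infeasible, i.e.\ $\sum_{i\in\Itwo}\hat{f}_{\tauprior}(X_i)> B$; since each $\hat{f}_{\tauprior}(X_i)\le M$ this forces $B<|\Itwo|\,M$, so $\gamma=|\Itwo|M/B$ and it remains to show $\pi_i^*\ge B/(|\Itwo|M)$.

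The second step is to read off the structure of $\pi^*$. First, the budget constraint must be active at the optimum: if it were slack then, since $\mathbf{1}$ is infeasible, some coordinate has $\pi_k^*<1$, and increasing it slightly preserves feasibility while strictly decreasing the objective --- contradicting optimality. Thus $\sum_{i\in\Itwo}\hat{f}_{\tauprior}(X_i)\,\pi_i^*=B$. Next, from the KKT conditions of the convex program \eqref{eq:global‐opt} (equivalently, the characterization underlying Proposition~\ref{prop:unique_lambda}), there is a Lagrange multiplier $\lambda>0$ for the budget constraint --- strictly positive, since $\lambda=0$ would force $\pi_i^*=1$ for all $i$ and hence make $\mathbf{1}$ feasible --- such that, writing $K:=1/(|\Itwo|\lambda)$, each coordinate is either \emph{interior}, with $\pi_i^*\in(0,1)$ and $\hat{f}_{\tauprior}(X_i)\,(\pi_i^*)^2=K$, or \emph{capped}, with $\pi_i^*=1$ and $\hat{f}_{\tauprior}(X_i)\le K$. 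At least one coordinate is interior (otherwise $\mathbf{1}$ would be feasible), so choosing such an $i_0$ gives $K=\hat{f}_{\tauprior}(X_{i_0})\,(\pi_{i_0}^*)^2\le\hat{f}_{\tauprior}(X_{i_0})\le M$.

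The final step combines these facts. Using $\hat{f}_{\tauprior}(X_i)\le M$ and $K\le M$, one obtains the uniform bounds $\pi_i^*\ge\sqrt{K/M}$ and $\hat{f}_{\tauprior}(X_i)\,\pi_i^*\le\sqrt{MK}$ for every $i\in\Itwo$ --- checking the interior case ($\pi_i^*=\sqrt{K/\hat{f}_{\tauprior}(X_i)}$, so $\hat{f}_{\tauprior}(X_i)\pi_i^*=\sqrt{\hat{f}_{\tauprior}(X_i)K}$) and the capped case ($\pi_i^*=1$ and $\hat{f}_{\tauprior}(X_i)\le K\le\sqrt{MK}$) separately. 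Plugging the second bound into the active budget equation gives $B=\sum_{i\in\Itwo}\hat{f}_{\tauprior}(X_i)\,\pi_i^*\le|\Itwo|\sqrt{MK}$, hence $K\ge B^2/(|\Itwo|^2M)$; substituting this into the first bound yields $\pi_i^*\ge\sqrt{K/M}\ge B/(|\Itwo|M)=1/\gamma$ for all $i$, i.e.\ $w(\{X_j\}_{j\in\Itwo},i)=1/\pi_i^*\le\gamma$.

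I expect the crux to be the lower bound $K\ge B^2/(|\Itwo|^2M)$ on the ``water level'' $K$: it hinges on the single inequality $\hat{f}_{\tauprior}(X_i)\,\pi_i^*\le\sqrt{MK}$ holding simultaneously for interior and capped coordinates, which in turn relies on the not-entirely-obvious fact that $K\le M$ (so that on a capped coordinate $\hat{f}_{\tauprior}(X_i)\le K$ already implies $\hat{f}_{\tauprior}(X_i)\le\sqrt{MK}$). The remaining ingredients --- the reduction to the case where $\mathbf{1}$ is infeasible, the tightness of the budget, and the interior/capped dichotomy --- are routine consequences of convexity and the KKT conditions, the last of which I would cite from Proposition~\ref{prop:unique_lambda} rather than rederive.
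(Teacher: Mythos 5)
Your proposal is correct, and it proves the bound by a genuinely different argument than the paper's. The paper's proof is an induction on the integer sum $\sum_{i\in\Itwo}\hat f_{\tauprior}(X_i)$: the base case is the symmetric vector with all entries equal to $M$, where the minimizer is exactly $\pi^*_i=B/(|\Itwo|M)$, and the inductive step increments one coordinate by one, shows via monotonicity of the budget-usage function $U$ that the multiplier can only grow ($\lambda'\ge\lambda$), hence every $\pi_i$ can only shrink, and then pulls the bound back from the incremented vector to the original one. You instead argue directly from the KKT structure $\pi^*_i=\min\{1,\sqrt{K/\hat f_{\tauprior}(X_i)}\}$ with $K=1/(|\Itwo|\lambda)$: after reducing to the case where the all-ones vector is infeasible and noting that the budget binds (the paper records this separately as a proposition in the appendix, which you may cite rather than rederive), you get $K\le M$ from any interior coordinate, deduce the uniform bounds $\pi_i^*\ge\sqrt{K/M}$ and $\hat f_{\tauprior}(X_i)\,\pi_i^*\le\sqrt{MK}$ in both the interior and capped cases, and conclude $K\ge B^2/(|\Itwo|^2M)$ from the active budget, hence $\pi_i^*\ge B/(|\Itwo|M)$ and $w(\{X_j\}_{j\in\Itwo},i)\le\gamma$. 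All steps check out, including the one you flag as the crux (on capped coordinates $\hat f_{\tauprior}(X_i)\le K\le\sqrt{MK}$ precisely because $K\le M$). What each approach buys: yours is shorter, makes the symmetric configuration transparently extremal via the explicit lower bound on the water level $K$, and does not require the $\hat f_{\tauprior}(X_i)$ to be integers, whereas the paper's unit-increment induction implicitly relies on integrality (harmless in context, since the trimmed quantile estimates are natural numbers); the paper's route, in exchange, uses only the monotonicity of $U(\lambda)$ and the closed form for $\pi^*(\lambda)$ already established for Algorithm~\ref{alg:SolveOptimization} and Proposition~\ref{prop:unique_lambda}, without explicitly invoking complementary slackness.
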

All proofs are deferred to Appendix~\ref{sec:proofs}. The above proposition shows us that weights derived by the solution $\pi^*$ are bounded by $\gamma$. Once all $C_i$-s are set using the ideal $\pi_i^*$, we can use our calibration procedure from Section~\ref{sec:calibration}, with the sole modification that the search space for $\tau$ is now $\mathcal{T} \cap[0,\tauprior]$.
This \texttt{Optimized} calibration scheme is outlined in Algorithm~\ref{alg:calibration_optimized} in Appendix~\ref{sec:optimized_alg}.

\section{Experiments}
\label{sec:exp}

We evaluate our methods on both synthetic (Section \ref{subsec:synth}) and real (Section \ref{subsec:real}) datasets. For a full description of how we train models to estimate the conditional quantiles of $T$ given $X$, along with additional experiments, implementation details, and runtime considerations for our calibration methods, see Appendix~\ref{sec:details}. 
Since there are no established baseline methods in this novel setting, we hereafter introduce two new calibration baselines, \texttt{Basic} and \texttt{Trimmed}, for comparison. We evaluate our proposed \texttt{Optimized} method against these two new baselines, as well as a vanilla uncalibrated quantile regression model and the calibrated \ttnaive.
The following experiments indicate that our \texttt{Optimized} budget allocation leads to more informative LPBs that attain a coverage rate closer to the desired level compared to the baselines.

\subsection{Baseline Allocation Strategies}

\paragraph{\texttt{Basic}.}\label{subsec:prompt_adaptive}
We follow the censoring times design from the \texttt{Optimized} method, and set them as in \eqref{eq:c_with_pi}:
$C_i := \text{Ber}(\cprob_i) \cdot \hat{q}_{ \tau_{\text{prior}}}(X_i)$.
However, unlike the \texttt{Optimized} approach, here we determine the success probability using a fixed heuristic: $\cprob_i = \min({B}/({|\Itwo| \cdot \hat{q}_{\tau_\text{prior}}(X_i)}), 1).$
We then apply the calibration procedure from Section~\ref{sec:calibration} using these generated censoring times. For convenience, Algorithm~\ref{alg:calibration_basic} in Appendix~\ref{sec:calibration_basic} outlines this \texttt{Basic} budget-allocation procedure.

\paragraph{\texttt{Trimmed}.} \label{subsec:prompt_capped} We extend the \texttt{Basic} budget allocation by trimming the maximal estimated quantile at a fixed threshold $M\in\mathbb N$. We do so similarly to the \texttt{Optimized} approach, to bound $\gamma$.
Consequently, the censoring times are now given by $ C_i := \text{Ber}(\cprob_i) \cdot \hat{f}_{ \tau_{\text{prior}}}(X_i),$ where 
$\hat{f}_{\tau}(x) := \min (\hat{q}_{\tau}(x), M)$ and
$\pi_i = \min({B}/({|\Itwo| \cdot \hat{f}_{\tau_\text{prior}}(X_i)}), 1).$ Following this trimming, the maximum weight $w(x)$ is bounded by $\gamma=\max\left({|\Itwo| \cdot M}/{B}, 1\right)$. As observed in Theorem~\ref{thm:general_validity}, this allows us to obtain a coverage guarantee that is closer to the desired $1-\alpha$ level. 
The rest of the calibration process continues as outlined in Section~\ref{sec:calibration}, using the trimmed estimates $\hat{f}_{\tau}(X_i)$ instead of $\hat{q}_{\tau}(X_i)$.
We summarize this scheme in Algorithm~\ref{alg:calibration_trimmed} in Appendix~\ref{sec:calibration_trimmed}.


\subsection{Synthetic Data Experiments}
\label{subsec:synth}
\begin{figure*}[ht]
    \centering
    \includegraphics[width=\textwidth]{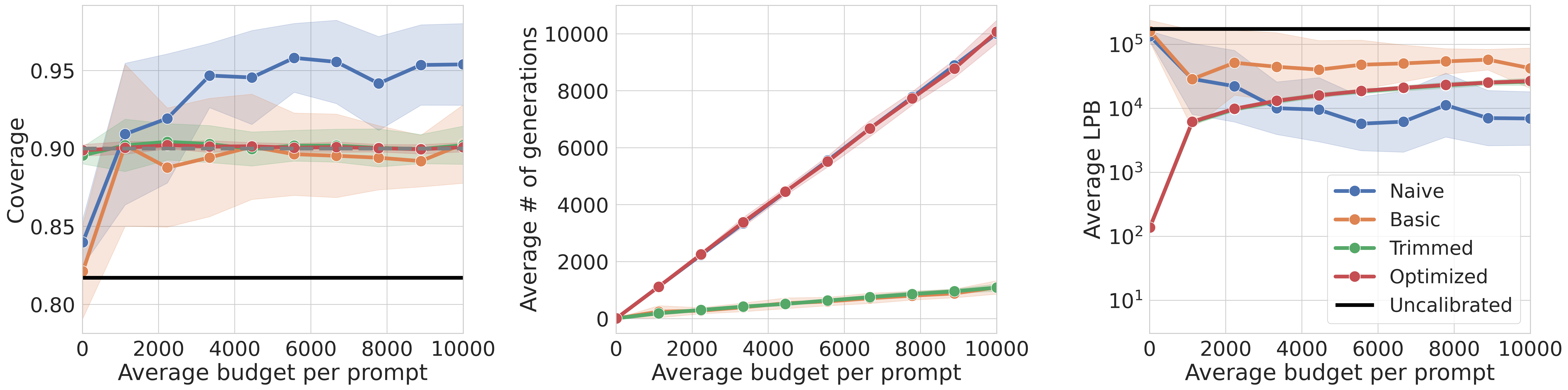}
    \caption{Synthetic experiments. 
    {\bf Left:} Coverage (target 90\%). 
    {\bf Center:} Mean number of samplings generated per prompt. 
    {\bf Right:} Mean LPB. 
    Shaded regions represent the standard deviation over 20 runs.
    }
    \label{fig:budget_effect}
\end{figure*}
We generate a dataset that simulates a high-risk setting with predominantly prompts that can yield unsafe outcomes. To this end, we generate $n = 100{,}000$ pairs \((X_i, p_i)\), where \(p_i\) is the true probability of sampling an unsafe \(Y_i\) for \(X_i\in\mathbb{R}^d\), with $d=10$.  Specifically,
\(
X_i \mid p_i,
\)
follows a normal distribution, defined in Appendix~\ref{subsec:synthetic_experiments}. The unsafe probability $p$ is chosen to simulate a dataset of ``suspicious'' prompts, where 90\% of which have a high probability for unsafe generation, with the remaining 10\% of prompts having a low probability for such an event.
Then, we draw $T_i\sim\mathrm{Geom}(p_i)$. We randomly partition the data into training (45\%), calibration (45\%), and test (10\%) sets. For each example in the training set, we generate $500$ outputs. We then use this training set to fit the quantile estimators $\hat{q}_\tau(x)$.

We compare (1) the \texttt{Uncalibrated} baseline, the raw quantile estimate $\hat q_{\tau}(X)$ at quantile level $\tau=\alpha$; (2) the \texttt{Naive} calibration method, which is a slight variant of the method from Section~\ref{sec:naive_allocation}. The original \texttt{Naive} approach tended to be unstable, often yielding trivial LPBs, since $\hat{q}_\tau(X_i)\leq C_i$ holds for very few points at high quantile levels and these samples do not contribute to the miscoverage estimator in~\eqref{eq:miscoverage_estimator}.
To mitigate this, we restrict 
the $\tau$'s search space to $\mathcal{T}\cap[0,\tauprior]$, as in the adaptive budget allocation methods. We also include (3) the \texttt{Basic} method from Section~\ref{subsec:prompt_adaptive}; (4) its \texttt{Trimmed} version from Section~\ref{subsec:prompt_capped}; and (5) our flagship \texttt{Optimized} approach from Section~\ref{subsec:prompt_global}. 

Recall that the calibration procedures are inherently random due to the sampling of $C_i$ and the random generations obtained by the model $\mathcal{G}(x)$. Ideally, this randomness should have a small effect on our bounds. To quantify this randomness, we fix the data split, and run each method for $J=20$ random draws of censoring and time-to-unsafe-sampling times. We index each run by $j$, and denote the resulting LPB by $\hat q_{\hat\tau}^{(j)}(X_i)$ and compute: 
\begin{align*}
    \mathrm{AvgCoverage}^{(j)}
&= \frac{1}{|\mathcal I_{\mathrm{test}}|}
  \sum_{i\in \mathcal I_{\mathrm{test}}}
    \mathbb{P}\{T_i \le \hat q_{\hat\tau}^{(j)}(X_i) | X_i\}, \\\
     \mathrm{AvgLPB}^{(j)}
&= \frac{1}{|\mathcal I_{\mathrm{test}}|}
  \sum_{i\in \mathcal I_{\mathrm{test}}}
    \hat q_{\hat\tau}^{(j)}(X_i), \\
    \mathrm{AvgBudget}^{(j)}
&= \frac{1}{| \mathcal I_{\mathrm{test}}|}
  \sum_{i\in \mathcal I_{\mathrm{test}}}
    C_i^{(j)}.
\end{align*}
Above, $\mathcal I_{\mathrm{test}}$ are the indices of the test points. We report the average and standard deviation of the above quantities over $J$ runs.

\textbf{Results.} We study the effect of the average budget per prompt $B/|\Itwo|$ on the coverage and mean LPB obtained by each method. Following Figure~\ref{fig:budget_effect}, we can see that the \texttt{Uncalibrated} method does not attain valid coverage. The \ttnaive method severely undercovers $T$ for small budgets, and overcovers $T$ for larger budgets. The \texttt{Basic} method attains more stable coverage around the desired $90\%$ level under medium-to-high budget, but exhibits high variance. By contrast, the \texttt{Trimmed} method attains nearly $90\%$ coverage across all budget constraints. Our flagship \texttt{Optimized} calibration method attains tight 90\% coverage with minimal variance compared to all other methods. The middle panel highlights that both the \ttnaive and \texttt{Optimized} approaches fully utilize the  budget; however, the latter is tightly regulated around the desired coverage---better utilizing the sampling budget. The right panel demonstrates that the \texttt{Trimmed} and \texttt{Optimized} methods produce stable LPBs with better statistical efficiency as the average budget per prompt increases. The increased budget allows us to set higher quantile trim values $M$ while enforcing low $\gamma$. 
As the budget increases, the two adaptive methods converge to the performance of the \texttt{Basic} technique. However, this technique yields higher LPBs with a higher coverage variability. Additional experiments, including oracle models, varying maximal weights, $\alpha$ levels, and calibration/test splits, are in Appendix~\ref{subsec:synthetic_experiments}.
\subsection{Real Data Experiments}
\label{subsec:real}


\begin{figure*}[ht]
  \centering
  \includegraphics[width=0.8\textwidth]{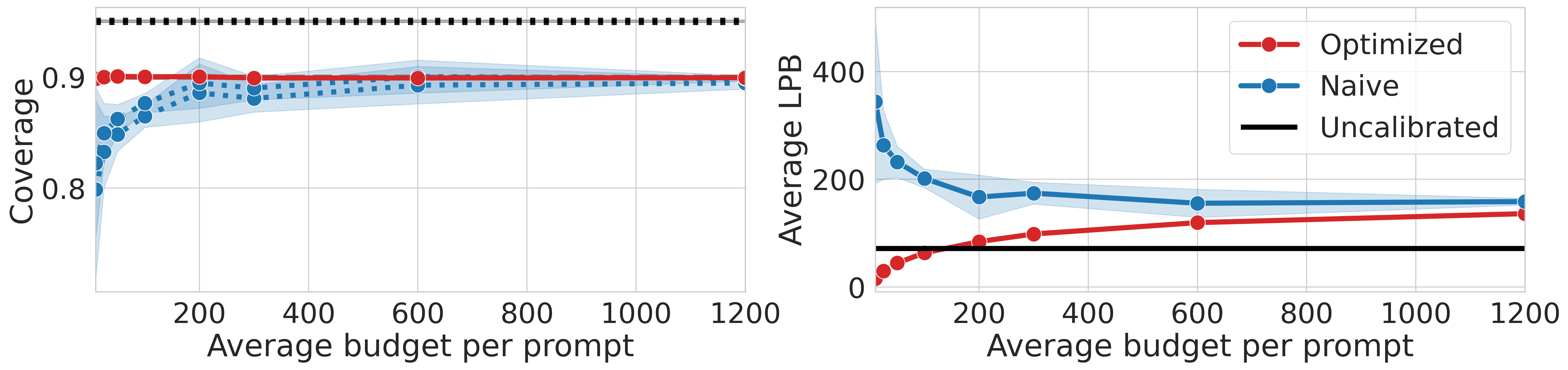}
  \caption{RealToxicityPrompts dataset experiment. {\bf Left:} Empirical coverage rate (target 90\%). The true coverage of the \texttt{Optimized} method is in a solid red line, while the upper and lower bounds on the coverage of the \texttt{Uncalibrated} and \texttt{Naive} methods correspond to the dotted lines. {\bf Right:} Mean LPB. Shaded regions represent the standard deviation, computed over $5$ runs. Higher is better.}
  \label{fig:real_results}
\end{figure*}

We now evaluate the effectiveness of our \texttt{Optimized} calibration method on real-world data. For this purpose, we use the RealToxicityPrompts dataset~\citep{gehman2020realtoxicityprompts}, which contains 99,442 sentences designed to measure model toxicity. Each sentence is divided into two halves: the first half serves as the input prompt $X_i$ to a language model, which must then complete the sentence. In this experiment, we generate about 30 million model responses. Because this scale is computationally demanding, we employ a relatively small model, Llama 3.2 1B~\citep{meta2024llama3_2, grattafiori2024llama3}. Furthermore, as we do not have access to human annotations, we rely on the Detoxify-original classifier~\citep{hanu2020detoxify} to label each completion as safe (non-toxic) or unsafe (toxic), using a toxicity score threshold of 0.5.

We randomly split the dataset into training (50\%), validation (10\%), calibration (20\%), and test (20\%) sets. On the training set, we sample \(N=500\) responses per prompt and fit a quantile estimator $\hat{q}_\tau(x)$ as detailed in Appendix~\ref{sec:details}. We employ the three calibration methods described in Section~\ref{subsec:synth}: (1) the \texttt{Uncalibrated} baseline, (2) the \texttt{Naive} method, and (3) the \texttt{Optimized} method, all aiming to attain 90\% coverage rate. 

In all experiments, we report the empirical mean of the LPBs over the test set. We also estimate the empirical miscoverage by drawing
$\min\bigl(\hat L(X_i),2400\bigr)$ samples for each prompt. We cap the maximal LLM generations on the test set at 2,400 samplings per prompt due to computational constraints. Since the \texttt{Optimized} method trims the quantiles at $M<2400$ across all evaluated budget levels, by drawing exactly $\hat L(X_i)$ samples (with $\hat L(X_i)\le M$) we can compute the miscoverage rate directly. Since $\hat L(X_i)$ is unbounded for the \texttt{Uncalibrated} and \texttt{Naive} methods, we can only report a lower bound on the empirical coverage rate using the 2,400 generations; see also \citep{gui2024conformalized}. In turn, the empirical lower and upper bounds are given by
     $
       \frac{1}{|\mathcal I_{\mathrm{test}}|}\sum_{i\in\mathcal I_{\mathrm{test}}}
       \mathbb I\bigl\{\hat L(X_i)\le\tilde T_i\bigr\},
     $ and $
       \frac{1}{|\mathcal I_{\mathrm{test}}|}\sum_{i\in\mathcal I_{\mathrm{test}}}
       \mathbb I\bigl\{\min(\hat L(X_i),2400)\le\tilde T_i\bigr\}
     $, respectively.
This evaluation protocol offers a fair comparison under relatively limited computations. To measure the variability introduced by the inherently stochastic calibration procedures, we use the same fixed data split for each run, as in our synthetic experiments. 

\textbf{Results.} Figure~\ref{fig:real_results} presents the results across \(J=5\) runs. The \texttt{Uncalibrated} baseline overcovers the time-to-unsafe-sampling, resulting in overly conservative LPBs. The \texttt{Naive} method, on the other hand, generates invalid LPBs with high variance at small budget levels. In contrast, our \texttt{Optimized} method consistently attains near‐nominal coverage rates with low variance. Notably, the LPBs produced by the \texttt{Optimized} method increase as the budget increases, approaching the values of the \texttt{Naive} and surpassing those of the \texttt{Uncalibrated} method.
Overall, this experiment shows that the \texttt{Optimized} method produces LPBs that are both valid across all tested budgets, and become more informative as the budget grows.

\section{Discussion}\label{sec:discussion}

We introduced a novel approach to construct a calibrated LPB for time-to-unsafe-sampling of a given prompt, casting this task as a survival analysis problem. Key to our method is the design of an optimized censoring mechanism that utilizes a given budget constraint on the number of LLM generations.
One limitation of our approach is that the coverage guarantee is marginal and does not hold conditionally on a specific subgroup. Consequently, one cannot group prompts with especially low (or high) LPBs and assume that those bounds retain their nominal coverage when conditioned on this selection. 
In future work, we plan to extend our approach to this conditional inference setting, leveraging the methods by~\citet{sesia2025conformal, jin2025confidence} as a foundation for achieving valid group-conditional coverage.
We remark that in our experimental framework, we treat the outputs of the Detoxify audit function as ground-truth events, as obtaining human annotations at the scale of millions of generations is infeasible. This reliance on a surrogate model introduces the potential for label noise. Consequently, a valuable direction for future research is to investigate how training on proxy labels affects the validity of the LPB when evaluated against human-annotated test data, potentially leveraging recent theoretical advancements in conformal prediction with label noise~\cite{einbinder2024label}.
Another limitation is that our calibration assumes the samples are drawn i.i.d. Even in benign real-world setups, the prompt distribution can shift, potentially undermining the validity of our method. A promising direction, which we have already begun to explore, we aim to propose continual or adaptive recalibration mechanisms that adjust to evolving user behavior and emergent risks, possibly by drawing on ideas from~\citep{gibbs2024conformal}. Concurrently, we are investigating how to optimally allocate the budget of the training phase to maximize the model's accuracy under limited LLM-sampling resources.

\subsubsection*{Acknowledgments}
Y.~R., S.~F., and H.~D. were supported by the European Union (ERC, SafetyBounds, 101163414). Views and opinions expressed are however those of the authors only and do not necessarily reflect those of the European Union or the European Research Council Executive Agency. Neither the European Union nor the granting authority can be held responsible for them.
Y.R. thanks the Career Advancement Fellowship, Technion.

\bibliographystyle{apalike}
\bibliography{references}

\section*{Checklist}

\begin{enumerate}

  \item For all models and algorithms presented, check if you include:
  \begin{enumerate}
    \item A clear description of the mathematical setting, assumptions, algorithm, and/or model. [Yes, all of our theory is given in Appendix~\ref{sec:proofs}. Our algorithms are provided in Appendix~\ref{sec:calibration_algs}.]
    \item An analysis of the properties and complexity (time, space, sample size) of any algorithm. [Yes, the time complexity and data sizes are given in Appendix~\ref{subsec:real_data_experiments}]
    \item (Optional) Anonymized source code, with specification of all dependencies, including external libraries. [Yes, our code is anonymized.]
  \end{enumerate}

  \item For any theoretical claim, check if you include:
  \begin{enumerate}
    \item Statements of the full set of assumptions of all theoretical results. [Yes, all our theoretical statements contain the full set of assumptions, see Appendix~\ref{sec:proofs}.]
    \item Complete proofs of all theoretical results. [Yes, all proofs are given in Appendix~\ref{sec:proofs}.]
    \item Clear explanations of any assumptions. [Yes, we explain our assumptions throughout the paper.]     
  \end{enumerate}

  \item For all figures and tables that present empirical results, check if you include:
  \begin{enumerate}
    \item The code, data, and instructions needed to reproduce the main experimental results (either in the supplemental material or as a URL). [Yes, the full experimental details are provided in Appendix~\ref{sec:details}.]
    \item All the training details (e.g., data splits, hyperparameters, how they were chosen). [Yes, all training details are provided in Appendix~\ref{sec:details}.]
    \item A clear definition of the specific measure or statistics and error bars (e.g., with respect to the random seed after running experiments multiple times). [Yes, measures and statistics are provided in Section~\ref{sec:exp} and in Appendix~\ref{sec:details}.]
    \item A description of the computing infrastructure used. (e.g., type of GPUs, internal cluster, or cloud provider). [Yes, see Appendix\ref{sec:machine_spec}.]
  \end{enumerate}

  \item If you are using existing assets (e.g., code, data, models) or curating/releasing new assets, check if you include:
  \begin{enumerate}
    \item Citations of the creator If your work uses existing assets. [Yes, we cited existing assets throughout the paper.]
    \item The license information of the assets, if applicable. [Yes, all assets used are publicly available.]
    \item New assets either in the supplemental material or as a URL, if applicable. [Yes, our code is provided in the supplemental material.]
    \item Information about consent from data providers/curators. [Not Applicable, all assets used are publicly available.]
    \item Discussion of sensible content if applicable, e.g., personally identifiable information or offensive content. [Yes, see Section~\ref{sec:discussion}.]
  \end{enumerate}

  \item If you used crowdsourcing or conducted research with human subjects, check if you include:
  \begin{enumerate}
    \item The full text of instructions given to participants and screenshots. [Not Applicable, we did not use crowdsourcing.]
    \item Descriptions of potential participant risks, with links to Institutional Review Board (IRB) approvals if applicable. [Not Applicable, we did not use crowdsourcing.]
    \item The estimated hourly wage paid to participants and the total amount spent on participant compensation. [Not Applicable, we did not use crowdsourcing.]
  \end{enumerate}

\end{enumerate}

\newpage
\onecolumn
\appendix

\section{Additional Related Work}\label{sec:additional_related_work}

The methods introduced in our paper draw on ideas from survival analysis, conformal prediction, and black-box uncertainty quantification. The {\texttt{Optimized}} method (Section~\ref{subsec:prompt_global}) in particular is also related to:  conformal prediction foundations, uncertainty quantification for LLMs, active learning and statistical inference, and optimal experimental design.

\paragraph{Conformal Prediction.} The theoretical foundations of conformal prediction were established by~\citet{ANKolmogorov_1983, 1054210, Papadopoulos2002InductiveCM, vovk2005algorithmic}. This framework has since been extended to accommodate covariate shift by applying weights to the distribution of conformity scores~\citep{tibshirani2020conformalpredictioncovariateshift}. Similar weighting strategies have also been employed to integrate active learning into conformal prediction~\citep{prinster2024conformal}. Building further on these weighted approaches, conformal prediction methods have recently been adapted for survival analysis tasks~\citep{candes2023conformalized, gui2024conformalized, davidovconformalized}.

\paragraph{Uncertainty Quantification for LLMs.}
Several recent works adapt conformal ideas to off-the-shelf LLMs. For instance,~\citet{wang2024conu} adapts conformal prediction to open‐ended generation by integrating a self–consistency–based uncertainty measure, achieving strict correctness coverage across seven popular LLMs and diverse domains. The work of~\citet{wang2025copu} further extends this by explicitly inserting the ground truth into candidate outputs and using logit‐based nonconformity scores to guarantee coverage over a wider range of error rates, though its reliance on logits raises calibration concerns when only API‐level access is available. Additionally, {\em Conformal Language Modeling} has been developed by \citet{quach2024conformal} to calibrate both stopping and rejection rules to guarantee coverage in open-domain QA and summarization.~\citet{lin2024generating} distinguish uncertainty from confidence using semantic dispersion metrics, enabling selective generation of low-confidence outputs.
To refine validity under practical constraints, \citet{cherian2024large} extended conditional conformal methods to incorporate utility-based guarantees and differentiable filtering. \citet{pmlr-v235-mohri24a} introduced ``conformal factuality,'' constructing entailment-based uncertainty sets that deliver high-probability correctness. The approach proposed in~\citep{gui2024conformalllm}  provides a guarantee for aligned responses in QA and radiology applications.

\paragraph{Active Learning and Statistical Inference.}
Active learning methods aim to select the most informative samples for labeling, improving model accuracy under tight budgets \citep{settles2009active}. This was formalized in~\citep{zrnic2024active} for statistical inference tasks, where one prioritizes labeling points with high model uncertainty while maintaining valid confidence intervals. Our \texttt{Optimized} method relates to this idea as it derives sample probabilities that directly minimize the variance of the augmented inverse-propensity-weighting estimator.

\paragraph{Optimal Experimental Design.}
Optimal experimental design allocates observations to minimize estimator variance under a given model. In~\citep{hahn2011adaptive}, the authors develop a two-stage adaptive design that chooses propensity scores to minimize the asymptotic variance bound in treatment-effect estimation, and the approach presented in~\citep{list2011so} presents simple rules of thumb for efficient designs under resource constraints. Analogously, our \texttt{Optimized} method treats the sampling rule $\pi(x)$ as a design variable and solves for the allocation that minimizes the miscoverage estimation variance.

\paragraph{Classical Survival Analysis.}
The survival analysis problem was first formalized in biostatistics and reliability engineering.~\cite{kaplan1958nonparametric} introduce a nonparametric estimator of the survival function for right‐censored data, laying the groundwork for all subsequent methods. The proportional hazards model was later developed, formalizing a semiparametric regression framework that relates covariates to an event’s hazard rate without specifying its baseline form. As an alternative formulation, accelerated failure time (AFT)~\citep{lin1998accelerated} models assume a parametric form for the log‐survival time, allowing direct modeling of time-to-event via common distributions such as Weibull or log-normal. More recently, machine-learning approaches have been developed for survival prediction. Random survival forests leverage ensemble tree methods to nonparametrically estimate cumulative hazard functions~\citep{ishwaran2008random}. DeepSurv employs a neural-network approximation of the Cox partial likelihood to capture complex covariate interactions~\citep{katzman2018deepsurv}, and Deep Cox Mixtures generalizes this by modeling mixtures of proportional hazards to flexibly adapt to heterogeneous subpopulations~\citep{nagpal2021deep}.

\section{Calibration Algorithms}\label{sec:calibration_algs}

\subsection{Calibration With a Known Censoring Mechanism}\label{sec:calibration_general}
We present the general calibration with known censoring mechanism in Algorithm~\ref{alg:calibration_with_known_censoring}.

\begin{algorithm}
\caption{Calibration With a Known Censoring Mechanism}
\label{alg:calibration_with_known_censoring}
\begin{algorithmic}[1]
\REQUIRE Calibration data $\{X_i, C_i, \tilde{T}_i\}_{i\in\Itwo}$, censoring weights $\{w_\tau(\{X_j\}_{j\in \Itwo}, i)\}_{i\in\Itwo, \tau\in\mathcal{T}}$, quantile estimates $\{\hat{f}_\tau(\cdot)\}_{\tau\in\mathcal{T}}$, target miscoverage rate $\tautarget$, prior quantile $\tauprior$.
\\\medskip

\FOR{$\tau \in \mathcal{T} \cap[0,\tauprior]$}
\STATE $
\hat{\alpha}(\tau) \gets \displaystyle\frac{1}{|\Itwo|}\sum_{i\in\mathcal I_2}
w_\tau(\{X_j\}_{j\in \Itwo}, i)\;
\mathbb{I}\bigl\{\tilde{T}_i<\hat f_\tau(X_i) \le C_i\bigr\}
$ {\color{darkgray} // miscoverage est.}

\ENDFOR
\\\medskip
\STATE $\hat{\tau} \gets \sup\Bigl\{\tau \in \mathcal{T} \cap[0,\tauprior] : \sup_{\substack{\tau' \in \mathcal{T} \\ \tau' \le \tau}} \hat{\alpha}(\tau') \le \tautarget\Bigr\}$ {\color{darkgray} \ \ // calibrated quantile level}

\RETURN Lower predictive bound (LPB) for a test point $X_\textup{test}=x$, given by $\hat{L}(x)= \hat{f}_{\hat{\tau}}(x)$
\end{algorithmic}
\end{algorithm}

\subsection{Calibration With a Known Censoring Mechanism: \ttnaive Mode}\label{sec:calibration_naive}
We describe the \ttnaive calibration approach in Algorithm~\ref{alg:calibration_naive}. First, we provide two auxiliary algorithms to sample outcomes from a generative model.

\begin{algorithm}
\caption{Generate and Audit Generative Model Responses}
\label{alg:sample_calibration_responses}
\begin{algorithmic}[1]
\REQUIRE Calibration data $\{X_i\}_{i\in\Itwo}$, Generative model \(\model(\cdot)\), audit function \(\texttt{Audit}(\cdot)\), Censoring times  $\{C_i\}_{i\in\Itwo}$.
\FOR{each \(i \in \mathcal{I}\)}
    \STATE Initialize \(j \gets 0\)
    \REPEAT
        \STATE \(j \gets j + 1\)
        \STATE Generate and evaluate the output's safety: \(Y_i^j \gets \texttt{Audit}(X_i,\model(X_i))\)
    \UNTIL{\(Y_i^j = 1\) \textbf{or} \(C_i = j\)}
    \STATE Set \(\tilde{T}_i \gets j\)
\ENDFOR
\RETURN \(\{\tilde{T}_i\}_{i \in \mathcal{I}_2}\)
\end{algorithmic}
\end{algorithm}

\begin{algorithm}
\caption{Generate Generative Model Outcomes}
\label{alg:sample_calibration_set}
\begin{algorithmic}[1]
\REQUIRE Calibration data $\{X_i\}_{i\in\Itwo}$, generative model \(\model(\cdot)\), max per-prompt budget \(\{\hat{f}_{\tauprior}(X_i)\}_{i \in \mathcal{I}}\), per-prompt evaluation probability \(\{\cprob_i\}_{i \in \mathcal{I}}\), audit function \(\texttt{Audit}(\cdot)\).
\FOR{each \(i \in \mathcal{I}\)}
    \STATE Draw \(V_i \sim \mathrm{Bernoulli}(\cprob_i)\)
    \STATE Set \(C_i \gets V_i \cdot \hat{f}_{\tauprior}(X_i)\)
\ENDFOR
\STATE Obtain $\{\tilde{T}_i\}_{\{i \in \mathcal{I}_2 \}}$ from Algorithm~\ref{alg:sample_calibration_responses} applied with $\{X_i\}_{i\in\Itwo}$, $\model$, $\{C_i\}_{i\in\Itwo}$.
\RETURN \(\{(\tilde{T}_i, C_i)\}_{i \in \mathcal{I}_2}\).
\end{algorithmic}
\end{algorithm}

\begin{algorithm}
\caption{Calibration With a Known Censoring Mechanism: \ttnaive mode}
\label{alg:calibration_naive}
\begin{algorithmic}[1]
\REQUIRE Calibration data $\{X_i\}_{i\in\Itwo}$, generative model \(\model(\cdot)\), audit function \(\texttt{Audit}(\cdot)\), pre-trained quantile regression model $\{\hat{q}_\tau(\cdot)\}_{\tau\in\mathcal{T}}$, target miscoverage rate $\tautarget$, prior quantile $\tauprior$, total budget $B$.
\\\medskip

    \STATE Sample $C_i \sim \mathrm{Geom}(n/B)$ for all ${i\in\Itwo} $ \\ \medskip
    \STATE Obtain $\{\tilde{T}_i\}_{\{i \in \mathcal{I}_2 \}}$ from Algorithm~\ref{alg:sample_calibration_responses} applied with $\{X_i\}_{i\in\Itwo}$, $\model$, $\{C_i\}_{i\in\Itwo}$. \medskip
    \STATE $w_\tau(\{X_j\}_{j\in \Itwo}, i) \gets \frac{1}{\mathbb{P}[\hat{q}_\tau(X_i) \le C_i|X_i]}, \ i\in\Itwo$.  {\color{darkgray} // compute the weights for the geometric $C_i$}.

\STATE $\{(\tilde{T}_i, C_i)\}_{i\in\Itwo} \gets $ Algorithm~\ref{alg:sample_calibration_set} applied to $\{X_i\}_{i\in\Itwo}$ with $\{\cprob_i\}_{i\in\Itwo}$ and $\{\hat{f}_\tauprior(X_i)\}_{i\in\Itwo}$
\STATE $w(\{X_j\}_{j\in\Itwo},i) \gets \frac{1}{\cprob_i}, \ i\in\Itwo$  

\STATE Obtain $\hat{L}(x)$ from Algorithm~\ref{alg:calibration_with_known_censoring}, applied with $\{X_i, C_i, \tilde{T}_i\}_{i\in\Itwo}$, $\{w_\tau(\{X_j\}_{j\in \Itwo}, i)\}_{i\in\Itwo}$, $\{\hat{q}_\tau(\cdot)\}_{\tau\in\mathcal{T}}$, $\tautarget$, and $\tauprior$.

\RETURN $\hat{L}(x)$, the lower predictive bound (LPB) for a test point $X_\textup{test}=x$.
\end{algorithmic}
\end{algorithm}

\subsection{Calibration With a Known Censoring Mechanism: \texttt{Basic} Mode}\label{sec:calibration_basic}

The calibration procedure with \texttt{Basic} budget allocation is outlined in Algorithm~\ref{alg:calibration_basic}.

\begin{algorithm}
\caption{Calibration With a Known Censoring Mechanism: \texttt{Basic} mode}
\label{alg:calibration_basic}
\begin{algorithmic}[1]
\REQUIRE Calibration data $\{X_i\}_{i\in\Itwo}$, generative model \(\model(\cdot)\), audit function \(\texttt{Audit}(\cdot)\), pre-trained quantile regression model $\{\hat{q}_\tau(\cdot)\}_{\tau\in\mathcal{T}}$, target miscoverage rate $\tautarget$, prior quantile $\tauprior$, total budget $B$.
\\\medskip

    \STATE $\{\hat{f}_\tau(X_i)\}_{\tau\in \mathcal{T},i \in \Itwo} \gets \{\hat{q}_\tau(X_i)\}_{\tau\in \mathcal{T},i \in \Itwo}$
    \STATE $\{\cprob_i\}\gets \{\min(\tfrac{B}{|\Itwo|\;\hat f_\tauprior(X_i)},1)\}$ {\color{darkgray} \ \ // per-prompt evaluation probability (Section~\ref{subsec:prompt_adaptive})} \medskip

\STATE $\{(\tilde{T}_i, C_i)\}_{i\in\Itwo} \gets $ Algorithm~\ref{alg:sample_calibration_set} applied to $\{X_i\}_{i\in\Itwo}$ with $\{\cprob_i\}_{i\in\Itwo}$ and $\{\hat{f}_\tauprior(X_i)\}_{i\in\Itwo}$
\STATE $w_\tau(\{X_j\}_{j\in \Itwo}, i) \gets \frac{1}{\cprob_i}, \ i\in\Itwo$  

\STATE Obtain $\hat{L}(x)$ from Algorithm~\ref{alg:calibration_with_known_censoring}, applied with $\{X_i, C_i, \tilde{T}_i\}_{i\in\Itwo}$, $\{w_\tau(\{X_j\}_{j\in \Itwo}, i)\}_{i\in\Itwo}$, $\{\hat{q}_\tau(\cdot)\}_{\tau\in\mathcal{T}}$, $\tautarget$, and $\tauprior$.

\RETURN $\hat{L}(x)$, the lower predictive bound (LPB) for a test point $X_\textup{test}=x$.
\end{algorithmic}
\end{algorithm}

\subsection{Calibration With a Known Censoring Mechanism: \texttt{Trimmed} Mode}\label{sec:calibration_trimmed}

We describe the calibration procedure with trimmed budget allocation in Algorithm~\ref{alg:calibration_trimmed}.

\begin{algorithm}
\caption{Calibration With a Known Censoring Mechanism: \texttt{Trimmed} Mode}
\label{alg:calibration_trimmed}
\begin{algorithmic}[1]
\REQUIRE Calibration data $\{X_i\}_{i\in\Itwo}$, generative model \(\model(\cdot)\), audit function \(\texttt{Audit}(\cdot)\), pre-trained quantile regression model $\{\hat{q}_\tau(\cdot)\}_{\tau\in\mathcal{T}}$, target miscoverage rate $\tautarget$, prior quantile $\tauprior$, total budget $B$.
\\\medskip

    \STATE $\{\hat{f}_\tau(X_i)\}_{\tau\in \mathcal{T},i \in \Itwo} \gets \{\hat{q}_\tau(X_i)\}_{\tau\in \mathcal{T},i \in \Itwo}$
    \STATE $\{\cprob_i\}\gets \{\min(\tfrac{B}{|\Itwo|\;\hat f_\tauprior(X_i)},1)\}$ {\color{darkgray} \ \ // per-prompt evaluation probability (Section~\ref{subsec:prompt_adaptive})} \medskip

\STATE $\{(\tilde{T}_i, C_i)\}_{i\in\Itwo} \gets $ Algorithm~\ref{alg:sample_calibration_set} applied to $\{X_i\}_{i\in\Itwo}$ with $\{\cprob_i\}_{i\in\Itwo}$ and $\{\hat{f}_\tauprior(X_i)\}_{i\in\Itwo}$
\STATE $w_\tau(\{X_j\}_{j\in \Itwo}, i) \gets \frac{1}{\cprob_i}, \ i\in\Itwo$  

\STATE Obtain $\hat{L}(x)$ from Algorithm~\ref{alg:calibration_with_known_censoring}, applied with $\{X_i, C_i, \tilde{T}_i\}_{i\in\Itwo}$, $\{w_\tau(\{X_j\}_{j\in \Itwo}, i)\}_{i\in\Itwo}$, $\{\hat{q}_\tau(\cdot)\}_{\tau\in\mathcal{T}}$, $\tautarget$, and $\tauprior$.

\RETURN $\hat{L}(x)$, the lower predictive bound (LPB) for a test point $X_\textup{test}=x$.
\end{algorithmic}
\end{algorithm}

\subsection{Calibration With a Known Censoring Mechanism: \texttt{Optimized} Mode}\label{sec:optimized_alg}

In this section, we detail the optimized adaptive budgeting algorithm.
First, observe that if the budget is sufficiently large such that the sum of $\hat{f}_{\tauprior}(X_i)$ does not exceed the budget, i.e., $\sum_{i\in\mathcal I_2}\hat{f}_{\tauprior}(X_i) \leq B$, then we can set the censoring times to the target value $C_i=\hat{f}_{\tauprior}(X_i)$ for all calibration points. In this case, the solution for~\eqref{eq:global‐opt} is straightforward, and the optimized probabilities are equal to one: $\pi_i^* = 1, \  \forall i \in\Itwo$. Since this setup is trivial, we will consider the more challenging setting, in which the target values exceed the threshold $\sum_{i\in\mathcal I_2}\hat{f}_{\tauprior}(X_i) > B$.

Since each summand $1/\pi_i$ is strictly convex in $\pi_i$ on $(0,1]$ and the constraint is linear, \eqref{eq:global‐opt} is a convex program. Furthermore, if the sum of $\hat{f}_{\tauprior}(X_i)$ exceeds the budget, the constraint turns into an equality constraint, as stated next.
\begin{proposition}\label{prop:budget‐tight}
Assume $\sum_{i\in\mathcal I_2}\hat{f}_{\tauprior}(X_i) > B$.  Then any minimizer $\pi^*$ of~\eqref{eq:global‐opt} must satisfy the budget constraint with equality: $$\sum_{i\in\mathcal I_2}\hat{f}_{\tauprior}(X_i)\,\pi^*_i \;=\; B.$$
\end{proposition}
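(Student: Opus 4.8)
The plan is to prove this by contradiction: assume the budget constraint is strict at a minimizer $\pi^*$, and construct a feasible perturbation that strictly lowers the objective $\frac{1}{|\Itwo|}\sum_i 1/\pi_i$, contradicting optimality. First I would observe that strict convexity of $t\mapsto 1/t$ on $(0,1]$ means that at any minimizer, we cannot have slack in the budget together with some coordinate $\pi_i^*<1$; and if all coordinates were equal to $1$, then $\sum_i \hat{f}_{\tauprior}(X_i)\pi_i^* = \sum_i \hat{f}_{\tauprior}(X_i) > B$ by the standing assumption, violating feasibility — so at least one coordinate satisfies $\pi_i^*<1$, and the budget must be binding for the perturbation argument to have room to move.

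The key steps, in order: (1) Suppose for contradiction $\sum_i \hat{f}_{\tauprior}(X_i)\pi_i^* = B' < B$. (2) Since feasibility forces some index $k$ with $\pi_k^* < 1$ (otherwise the sum exceeds $B$), we have slack $B - B' > 0$ and headroom $1 - \pi_k^* > 0$. (3) Define $\pi_k^{\text{new}} = \pi_k^* + \varepsilon$ for $\varepsilon = \min\big(1-\pi_k^*,\ (B-B')/\hat{f}_{\tauprior}(X_k)\big) > 0$, leaving all other coordinates unchanged. (4) Check feasibility: $\pi_k^{\text{new}} \le 1$ by choice of $\varepsilon$, and the new budget usage is $B' + \varepsilon\,\hat{f}_{\tauprior}(X_k) \le B' + (B-B') = B$. (5) Check strict improvement: $1/\pi_k^{\text{new}} < 1/\pi_k^*$ strictly since $1/t$ is strictly decreasing and $\varepsilon>0$, so the objective strictly decreases. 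This contradicts the minimality of $\pi^*$, hence the budget constraint must hold with equality.

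I would also need to briefly address that a minimizer $\pi^*$ exists at all (so that "any minimizer" is not vacuous and the argument applies): the feasible set $\{\pi\in[0,1]^{|\Itwo|} : \sum_i \hat{f}_{\tauprior}(X_i)\pi_i \le B\}$ is compact, but the objective $\sum_i 1/\pi_i$ blows up as any $\pi_i\to 0$; restricting to the sublevel set of any feasible interior point (e.g., the uniform allocation $\pi_i = B/\sum_j \hat{f}_{\tauprior}(X_j) \in (0,1)$, feasible since $\sum \hat f_{\tauprior}(X_j) > B$) gives a compact set on which $1/\pi_i$ is continuous, so a minimizer exists. Since the paper's Appendix construction (Algorithm~\ref{alg:SolveOptimization} and Proposition~\ref{prop:unique_lambda}) already exhibits an explicit strictly positive solution, one may alternatively just invoke that.

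The main obstacle is essentially bookkeeping rather than conceptual: one must be careful that the chosen index $k$ with $\pi_k^*<1$ actually exists, which relies crucially on the hypothesis $\sum_i \hat{f}_{\tauprior}(X_i) > B$ — without it, the all-ones allocation would be feasible and the budget need not bind. Everything else is a one-line convexity/monotonicity observation, so the proof is short; the only thing worth stating carefully is why the perturbation stays feasible (the $\min$ in the definition of $\varepsilon$ handles both the box constraint and the budget constraint simultaneously).
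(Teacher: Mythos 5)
Your proposal is correct and follows essentially the same route as the paper's own proof: a perturbation argument that increases one coordinate with headroom, strictly decreasing the objective by strict monotonicity of $t\mapsto 1/t$, contradicting optimality. If anything, your write-up is slightly more careful than the paper's, since you explicitly justify that some coordinate satisfies $\pi_k^*<1$ (using $\sum_i \hat{f}_{\tauprior}(X_i)>B$) and that a minimizer exists, points the paper leaves implicit.
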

We now turn to show how to compute the optimized probabilities $\pi^*$. Under Proposition~\ref{prop:budget‐tight}, the optimization constraint in~\eqref{eq:global‐opt} turns into an equality constraint, and the Lagrangian is therefore given by
\[
\mathcal L(\pi,\lambda)
=\frac{1}{|\Itwo|}\sum_{i\in\mathcal I_2}\frac{1}{\pi_i}
+\lambda\Bigl(\sum_{i\in\mathcal I_2}\hat{f}_{\tauprior}(X_i)\,\pi_i - B\Bigr),
\]
where $\lambda>0$ is the Lagrange multiplier.  On the open domain $(0,1]^n$, the objective is strictly convex and differentiable, so the single stationary point satisfying the first‐order conditions is in fact the unique global minimizer. Taking partial derivatives,
\[
\frac{\partial\mathcal L}{\partial\pi_i}
=-\frac{1}{|\Itwo|\,\pi_i^2}+\lambda\,\hat{f}_{\tauprior}(X_i)
\;=\;0
\quad\Longrightarrow\quad
\pi_i=\frac{1}{\sqrt{|\Itwo|\,\lambda\,\hat{f}_{\tauprior}(X_i)}}.
\]
Applying the box constraint $\pi_i\le1$ gives
\[
\pi_i^*(\lambda)
=\min \Bigl\{1,\;\frac{1}{\sqrt{|\Itwo|\,\lambda\,\hat{f}_{\tauprior}(X_i)}}\Bigr\}.
\]
Now, define the budget‐usage function
\[
U(\lambda)\;=\;\sum_{i\in\mathcal I_2}\hat{f}_{\tauprior}(X_i)\,\pi_i^*(\lambda).
\]
The above equation has a unique solution $\lambda^* >0$ which satisfies $U(\lambda^*)=B$. This $\lambda^* >0$ can be recovered by bisection, across the following interval:
\begin{equation}
\left[\lambda_{\textup{low}}=\;\frac{1}{|\Itwo|\max_{i\in\mathcal I_2}\hat{f}_{\tauprior}(X_i)}, \ \lambda_{\textup{high}}=\frac{|\Itwo| \max_{i\in\mathcal I_2}\hat{f}_{\tauprior}(X_i)^2}{B^2\min_{i\in\mathcal I_2}\hat{f}_{\tauprior}(X_i)}\right].
\end{equation}
This procedure is summarized in Algorithm~\ref{alg:SolveOptimization} and it produces a valid solution, as stated in the following proposition.
\begin{proposition}\label{prop:unique_lambda}
The output $\pi^*$ of Algorithm~\ref{alg:SolveOptimization} is the unique solution of~\eqref{eq:global‐opt} which satisfies $\sum_{i\in\mathcal I_2}\hat{f}_{\tauprior}(X_i)\,\pi^*_i \!\le\! B$.
\end{proposition}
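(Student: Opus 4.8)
The plan is to verify the three claimed properties of the output $\pi^*$ of Algorithm~\ref{alg:SolveOptimization} in turn: (i) it satisfies the box constraint $\pi^*\in(0,1]^{|\Itwo|}$ and the budget constraint with $\sum_{i\in\Itwo}\hat f_{\tauprior}(X_i)\,\pi_i^*\le B$; (ii) it is a KKT point of the convex program~\eqref{eq:global‐opt}; and (iii) by strict convexity this KKT point is the unique global minimizer. The bisection step is the heart of the argument, so I would first establish that the budget-usage function $U(\lambda)=\sum_{i\in\Itwo}\hat f_{\tauprior}(X_i)\,\pi_i^*(\lambda)$ is continuous and non-increasing in $\lambda$ on $(0,\infty)$, and strictly decreasing on the range of $\lambda$ where at least one coordinate is interior (i.e.\ where $\pi_i^*(\lambda)<1$ for some $i$). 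Continuity is immediate since each $\pi_i^*(\lambda)=\min\{1,(|\Itwo|\,\lambda\,\hat f_{\tauprior}(X_i))^{-1/2}\}$ is a minimum of two continuous functions of $\lambda$, and monotonicity follows because $(|\Itwo|\,\lambda\,\hat f_{\tauprior}(X_i))^{-1/2}$ is decreasing in $\lambda$.

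Next I would pin down the two endpoints of the bisection interval. At $\lambda_{\textup{low}}=1/(|\Itwo|\max_i\hat f_{\tauprior}(X_i))$ one checks that $(|\Itwo|\,\lambda_{\textup{low}}\,\hat f_{\tauprior}(X_i))^{-1/2}=\sqrt{\max_j\hat f_{\tauprior}(X_j)/\hat f_{\tauprior}(X_i)}\ge 1$ for every $i$, hence $\pi_i^*(\lambda_{\textup{low}})=1$ for all $i$ and $U(\lambda_{\textup{low}})=\sum_{i\in\Itwo}\hat f_{\tauprior}(X_i)>B$ by the standing assumption of this section. At $\lambda_{\textup{high}}=|\Itwo|\max_i\hat f_{\tauprior}(X_i)^2/(B^2\min_i\hat f_{\tauprior}(X_i))$ I would bound $U(\lambda_{\textup{high}})$ from above: since $\pi_i^*(\lambda_{\textup{high}})\le (|\Itwo|\,\lambda_{\textup{high}}\,\hat f_{\tauprior}(X_i))^{-1/2}$, we get $\hat f_{\tauprior}(X_i)\,\pi_i^*(\lambda_{\textup{high}})\le \sqrt{\hat f_{\tauprior}(X_i)/(|\Itwo|\,\lambda_{\textup{high}})}\le \sqrt{\max_j\hat f_{\tauprior}(X_j)/(|\Itwo|\,\lambda_{\textup{high}})}$, and plugging in $\lambda_{\textup{high}}$ this is at most $B\sqrt{\min_j\hat f_{\tauprior}(X_j)}/(|\Itwo|\max_j\hat f_{\tauprior}(X_j)) \cdot \sqrt{\max_j \hat f_{\tauprior}(X_j)}$; summing over the $|\Itwo|$ indices and simplifying gives $U(\lambda_{\textup{high}})\le B$. (I would double-check the algebra here, but the point is only that $\lambda_{\textup{high}}$ is chosen so the inequality $U(\lambda_{\textup{high}})\le B$ holds.) Since $U$ is continuous with $U(\lambda_{\textup{low}})>B\ge U(\lambda_{\textup{high}})$, the intermediate value theorem yields a root $\lambda^*$ with $U(\lambda^*)=B$; monotonicity of $U$ (strict on the relevant range, using that $\min_i\hat f_{\tauprior}(X_i)>0$ so not all coordinates can be clamped to $1$ once $\lambda$ exceeds $\lambda_{\textup{low}}$) makes this root unique, and bisection converges to it.

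With $\lambda^*>0$ and $\pi^*=\pi^*(\lambda^*)$ in hand, I would close the argument by the standard convex-duality checklist. The program~\eqref{eq:global‐opt} has strictly convex, differentiable objective $\frac1{|\Itwo|}\sum_i 1/\pi_i$ on the open orthant, a single linear inequality constraint, and box constraints $\pi_i\le1$; Slater's condition holds (e.g.\ take all $\pi_i$ small). By construction $\pi_i^*=\min\{1,(|\Itwo|\,\lambda^*\,\hat f_{\tauprior}(X_i))^{-1/2}\}$ is exactly the solution of the per-coordinate stationarity condition $-1/(|\Itwo|\,\pi_i^2)+\lambda^*\hat f_{\tauprior}(X_i)+\mu_i=0$ with $\mu_i\ge0$ the multiplier for $\pi_i\le1$ (active precisely when the unconstrained root exceeds $1$), so the full KKT system — stationarity, primal feasibility $U(\lambda^*)=B\le B$, dual feasibility $\lambda^*>0,\mu_i\ge0$, and complementary slackness (the budget constraint is tight, and $\mu_i>0$ only when $\pi_i^*=1$) — is satisfied. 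Strict convexity of the objective then forces $\pi^*$ to be the unique global minimizer, which also automatically satisfies $\sum_i\hat f_{\tauprior}(X_i)\,\pi_i^*\le B$. I expect the main obstacle to be purely bookkeeping: verifying that the stated $\lambda_{\textup{high}}$ really does give $U(\lambda_{\textup{high}})\le B$ (the constant is loose, so this should go through cleanly) and handling the degenerate-looking but harmless case distinctions — e.g.\ when $\lambda^*$ lands exactly on a value where some $\pi_i^*$ transitions between the clamped and interior regime — none of which affects continuity or uniqueness of the root.
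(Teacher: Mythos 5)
Your proposal is correct and follows essentially the same route as the paper: characterize the solution through the Lagrangian/KKT conditions as $\pi_i^*(\lambda)=\min\{1,(|\Itwo|\,\lambda\,\hat f_{\tauprior}(X_i))^{-1/2}\}$, show the budget-usage function $U$ is continuous and monotone so that a unique root $\lambda^*$ with $U(\lambda^*)=B$ exists and is found by bisection, and invoke strict convexity for uniqueness of the minimizer. Your write-up is in fact slightly more careful than the paper's (explicit box-constraint multipliers, the observation that $U$ is only non-increasing on the clamped region, and the endpoint check $U(\lambda_{\textup{high}})\le B$); the only thing to add for completeness is the trivial branch $\sum_{i\in\Itwo}\hat f_{\tauprior}(X_i)\le B$, where the algorithm returns the all-ones vector, which the paper dispatches in one line.
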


\begin{algorithm}
\caption{Calibration with a known censoring mechanism: optimized budget allocation}
\label{alg:calibration_optimized}
\begin{algorithmic}[1]
\REQUIRE Calibration data $\{X_i\}_{i\in\Itwo}$, generative model \(\model(\cdot)\), audit function \(\texttt{Audit}(\cdot)\), pre-trained quantile regression model $\{\hat{q}_\tau(\cdot)\}_{\tau\in\mathcal{T}}$, target miscoverage rate $\tautarget$, prior quantile $\tauprior$, quantile trimming threshold $M$, total budget $B$.
\\\medskip

    \STATE $\{\hat{f}_\tau(X_i)\}_{\tau\in \mathcal{T},i \in \Itwo} \gets \{\min{(\hat{q}_\tau(X_i),M)}\}_{\tau\in \mathcal{T},i \in \Itwo}$ {\color{darkgray} \ \ // trim the quantile est. (Section~\ref{subsec:prompt_capped})}
    \STATE $\{\cprob_i\}\gets$ Algorithm~\ref{alg:SolveOptimization} applied with $\{\hat{f}_\tauprior(X_i)\}_{i \in \Itwo}$ {\color{darkgray} \ \ // optimized per-prompt evaluation \\\qquad\qquad\qquad\qquad\qquad\qquad\qquad\qquad\qquad\qquad\qquad probability (Section~\ref{subsec:prompt_global})}
\medskip

\STATE $\{(\tilde{T}_i, C_i)\}_{i\in\Itwo} \gets $ Algorithm~\ref{alg:sample_calibration_set} applied to $\{X_i\}_{i\in\Itwo}$ with $\{\cprob_i\}_{i\in\Itwo}$ and $\{\hat{f}_\tauprior(X_i)\}_{i\in\Itwo}$
\STATE $w(\{X_j\}_{j\in\Itwo},i) \gets \frac{1}{\cprob_i}, \ i\in\Itwo$  

\STATE Obtain $\hat{L}(x)$ from Algorithm~\ref{alg:calibration_with_known_censoring}, applied with $\{X_i, C_i, \tilde{T}_i\}_{i\in\Itwo}$, $\{w_{i,}\}_{i\in\Itwo}$, $\{\hat{f}_\tau(\cdot)\}_{\tau\in\mathcal{T}}$, $\tautarget$, and $\tauprior$.

\RETURN $\hat{L}(x)$, the lower predictive bound (LPB) for a test point $X_\textup{test}=x$.
\end{algorithmic}
\end{algorithm}

\begin{algorithm}[H]
\caption{Get Optimal Per-Prompt Evaluation Probabilities}
\label{alg:SolveOptimization}
\begin{algorithmic}[1]
\REQUIRE effective sample cost \(\{\hat{f}_{\tauprior}(X_i)\}_{i \in \mathcal{I}_2}\), budget \(B\), bijection tolerance \(\epsilon\), initial \(\lambda_{\textup low},\lambda_{\textup high}\)
\ENSURE Optimal \(\pi^*\) and multiplier \(\lambda^*\) satisfying
\(\pi^*_i=\min\{1,1/\sqrt{\lambda^*\,\hat{f}_{\tauprior}(X_i)}\}\) and \(\sum_{i\in \Itwo} \hat{f}_{\tauprior}(X_i)\,\pi^*_i=B\).
\IF{\(B>\sum_{i\in \Itwo} \hat{f}_{\tauprior}(X_i)\)}
    \STATE \(\pi^*\gets[1,\dots,1]\), \(\lambda^*\gets\texttt{Null}\)
\ELSE
    \WHILE{\(\sum_{i\in \Itwo} \hat{f}_{\tauprior}(X_i)\,\pi_i^*(\lambda_{\textup high}) > B\)}
        \STATE \(\lambda_{\textup{high}}\gets 2\,\lambda_{\textup{high}}\)
    \ENDWHILE
    \STATE \(\lambda^*\gets\) bisection on \([\lambda_{\textup{low}},\lambda_{\textup{high}}]\) to solve 
    \(\sum_{i\in \Itwo} \hat{f}_{\tauprior}(X_i)\,\pi_i^*(\lambda)=B\) within \(\epsilon\)
    \STATE \(\pi^*_i\gets \min\{1,1/\sqrt{\lambda^*\,\hat{f}_{\tauprior}(X_i)}\}\) for all \(i\in \Itwo\)
\ENDIF
\RETURN \((\pi^*,\lambda^*)\)
\end{algorithmic}
\end{algorithm}

\section{A Deeper Look Into \texorpdfstring{$\hat{\alpha}(\tau)$}{alpha(tau)}}
\label{appendix:alpha_hat}

In Section~\ref{sec:calibration}, we introduced the miscoverage estimator $\hat{\alpha}(\tau)$. In this section, we explore its statistical properties in detail. We begin by deriving its expectation and variance:

\begin{proposition}[Unbiasedness and conditional variance of the weighted miscoverage estimator]
\label{prop:unbiasedness_variance_conditional}
Under the conditional independence assumption~\ref{assumption:conditionally_independent_censoring} in Section~\ref{sec:setup}, the estimator $\hat\alpha(\tau)$ defined in equation~\eqref{eq:miscoverage_estimator} satisfies
\[
\mathbb{E}\bigl[\hat\alpha(\tau)\bigr]
=\frac{1}{|\mathcal I_2|}\sum_{i\in\mathcal I_2}\mathbb{P}\bigl[T_i<\hat q_\tau(X_i)\bigr].
\]
If we further assume that for all $i\neq i'\in\Itwo$, $(C_i,T_i) \indep(C_{i'},T_{i'})\mid\{X_j\}_{j\in\mathcal I_2}$, we have 
\begin{align*}
\Var\bigl[\hat\alpha(\tau)\,\bigm|\,\{X_j\}_{j\in\mathcal I_2}\bigr]
= \frac{1}{|\mathcal I_2|^2}
   \sum_{i\in\mathcal I_2}
   \Bigl\{
     w(\{X_j\}_{j\in \Itwo}, i)&\,
     \mathbb{P}\bigl[T_i<\hat q_\tau(X_i)\,\bigm|\;\{X_j\}_{j\in\mathcal I_2}\bigr]
     \\[-0.3em]
    &-\;
     \bigl[\mathbb{P}\bigl[T_i<\hat q_\tau(X_i)\,\bigm|\;\{X_j\}_{j\in\mathcal I_2}\bigr]\bigr]^2
   \Bigr\}.
\end{align*}
\end{proposition}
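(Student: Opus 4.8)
The plan is to compute the expectation and the conditional variance separately, in both cases exploiting the fact that the per-prompt evaluation probability $\pi_i$ (and hence the weight $w(\{X_j\}_{j\in\Itwo},i)=1/\pi_i$) is designed by us and thus deterministic once we condition on $\{X_j\}_{j\in\Itwo}$. First I would analyze a single summand of $\hat\alpha(\tau)$, namely $w(\{X_j\}_{j\in\Itwo},i)\,\mathbb{I}\{\hat q_\tau(X_i)\le C_i\}\,\mathbb{I}\{T_i<\hat q_\tau(X_i)\}$. Under the conditionally independent censoring assumption (Assumption~\ref{assumption:conditionally_independent_censoring}), $C_i$ and $T_i$ are independent given $X_i$, so conditioning further on $X_i$ gives
\[
\mathbb{E}\bigl[\mathbb{I}\{\hat q_\tau(X_i)\le C_i\}\,\mathbb{I}\{T_i<\hat q_\tau(X_i)\}\bigm| X_i\bigr]
=\mathbb{P}\bigl[\hat q_\tau(X_i)\le C_i\bigm| X_i\bigr]\cdot\mathbb{P}\bigl[T_i<\hat q_\tau(X_i)\bigm| X_i\bigr].
\]
Multiplying by $w(\{X_j\}_{j\in\Itwo},i)=1/\mathbb{P}[\hat q_\tau(X_i)\le C_i\mid X_i]$ (this is exactly the definition of the weight, using that we know the true censoring distribution) cancels the first factor, leaving $\mathbb{P}[T_i<\hat q_\tau(X_i)\mid X_i]$. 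Taking expectation over $X_i$ and averaging over $i\in\Itwo$ yields the claimed formula for $\mathbb{E}[\hat\alpha(\tau)]$; note the tower property handles the passage from conditioning on $X_i$ to the unconditional statement, and nothing about the joint law of the $X_j$'s is needed here.

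For the variance, I would condition throughout on $\{X_j\}_{j\in\Itwo}$, under which the weights are constants. Writing $\hat\alpha(\tau)=\frac{1}{|\Itwo|}\sum_{i\in\Itwo}Z_i$ with $Z_i:=w(\{X_j\}_{j\in\Itwo},i)\,\mathbb{I}\{\hat q_\tau(X_i)\le C_i\}\,\mathbb{I}\{T_i<\hat q_\tau(X_i)\}$, the added assumption $(C_i,T_i)\indep(C_{i'},T_{i'})\mid\{X_j\}_{j\in\Itwo}$ for $i\ne i'$ makes the $Z_i$ conditionally independent, so all cross-covariances vanish and $\Var[\hat\alpha(\tau)\mid\{X_j\}_{j\in\Itwo}]=\frac{1}{|\Itwo|^2}\sum_{i\in\Itwo}\Var[Z_i\mid\{X_j\}_{j\in\Itwo}]$. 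It remains to compute $\Var[Z_i\mid\{X_j\}_{j\in\Itwo}]=\mathbb{E}[Z_i^2\mid\{X_j\}_{j\in\Itwo}]-\bigl(\mathbb{E}[Z_i\mid\{X_j\}_{j\in\Itwo}]\bigr)^2$. Since the two indicators are $\{0,1\}$-valued, $Z_i^2=w(\{X_j\}_{j\in\Itwo},i)^2\,\mathbb{I}\{\hat q_\tau(X_i)\le C_i\}\,\mathbb{I}\{T_i<\hat q_\tau(X_i)\}$ (the indicator products are idempotent), and applying the same conditional-independence-of-censoring cancellation as above—now keeping one copy of the weight uncancelled—gives $\mathbb{E}[Z_i^2\mid\{X_j\}_{j\in\Itwo}]=w(\{X_j\}_{j\in\Itwo},i)\cdot\mathbb{P}[T_i<\hat q_\tau(X_i)\mid\{X_j\}_{j\in\Itwo}]$; the cancellation used here needs that $\pi_i$ can depend on the whole calibration $X$-vector but the censoring event $\{\hat q_\tau(X_i)\le C_i\}$ has conditional probability $\pi_i$ given $\{X_j\}_{j\in\Itwo}$ (true for all three schemes, including \texttt{Optimized}). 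Meanwhile $\mathbb{E}[Z_i\mid\{X_j\}_{j\in\Itwo}]=\mathbb{P}[T_i<\hat q_\tau(X_i)\mid\{X_j\}_{j\in\Itwo}]$ by the expectation computation specialized to conditioning on $\{X_j\}_{j\in\Itwo}$. Subtracting the square of the latter from the former and summing gives exactly the stated variance formula.

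The only real subtlety—and the step I would be most careful about—is the cancellation $\mathbb{E}[w(\{X_j\}_{j\in\Itwo},i)\,\mathbb{I}\{\hat q_\tau(X_i)\le C_i\}\mid\cdots]=1$ when the weight is not the naive $1/\mathbb{P}[\hat q_\tau(X_i)\le C_i\mid X_i]$ but the \texttt{Optimized} weight $1/\pi_i^*$ that depends on the entire calibration prompt vector. One must check that, conditional on $\{X_j\}_{j\in\Itwo}$, the censoring time $C_i=\mathrm{Ber}(\pi_i^*)\cdot\hat f_{\tauprior}(X_i)$ still satisfies $\mathbb{P}[\hat q_\tau(X_i)\le C_i\mid\{X_j\}_{j\in\Itwo}]=\pi_i^*$ for every $\tau\le\tauprior$ (using monotonicity of $\hat q_\tau$ so that $\hat q_\tau(X_i)\le\hat f_{\tauprior}(X_i)$ whenever the Bernoulli succeeds), and that $T_i$ is drawn independently of this Bernoulli. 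Once that is verified, everything reduces to the idempotence of indicators and the tower property, and the computation is routine.
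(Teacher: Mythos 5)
Your proposal is correct and follows essentially the same route as the paper's proof: analyze each summand, use conditionally independent censoring to factor the censoring and miscoverage indicators, cancel one copy of the weight against $\mathbb{P}[\hat q_\tau(X_i)\le C_i\mid\cdot]$, apply the tower property for the unconditional expectation, and use idempotence of the indicators plus pairwise conditional independence given $\{X_j\}_{j\in\Itwo}$ to reduce the conditional variance to the per-summand terms. Your closing remark about verifying the cancellation for weights depending on the full calibration prompt vector (the \texttt{Optimized} scheme) is exactly the point the paper handles by conditioning on $\{X_j\}_{j\in\Itwo}$ throughout, so nothing is missing.
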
 
See Appendix~\ref{appendix:proof_unbiasedness} for the proof. We remark that the conditional independence and pair-wise independence assumptions in Proposition~\ref{prop:unbiasedness_variance_conditional} hold for all variants of our method from Sections~\ref{sec:methods}. This proposition shows that $\hat{\alpha}(\tau)$ is an unbiased miscoverage estimator, resulting in a useful approximation for the miscoverage estimation variance. Under a simplified assumption that, for a fixed $\tau$, each calibration point $i\in\Itwo$ is miscovered at the same constant rate conditional on $\{X_j\}_{j\in\mathcal I_2}$, we have the following result.
\begin{proposition}[Variance linearly increasing in mean weight under constant miscoverage]
\label{prop:variance_constant_miscoverage}
Under Assumption~\ref{assumption:conditionally_independent_censoring}, and additionally assuming that for all $i\neq i'$, $(C_i,T_i)\indep(C_{i'},T_{i'})\mid\{X_j\}_{j\in\mathcal I_2}$, and that for any fixed $\tau$, each calibration point $i\in\Itwo$ is miscovered at the same constant rate conditional on $\{X_j\}_{j\in\mathcal I_2}$,
\[
\mathbb{P}\bigl[T_i<\hat q_\tau(X_i)\,\bigm|\;\{X_j\}_{j\in\mathcal I_2}\bigr] = \mathrm{Const}_\tau.
\]
Then, the variance of $\hat\alpha(\tau)$ from~\eqref{eq:miscoverage_estimator} is
\begin{align*}
\Var\bigl[\hat\alpha(\tau)\,\bigm|\,\{X_j\}_{j\in\mathcal I_2}\bigr]
&=\frac{\mathrm{Const}_\tau}{|\mathcal I_2|}\overline w_\tau-\frac{\mathrm{Const}^2_\tau}{|\mathcal I_2|},
\end{align*}
where $\overline w_\tau=|\mathcal I_2|^{-1}\sum_{i\in\mathcal I_2}w_\tau(\{X_j\}_{j\in\mathcal I_2},i)$ is the mean weight.  
\end{proposition}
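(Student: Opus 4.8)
The plan is to directly substitute the constant-miscoverage hypothesis into the conditional variance formula already established in Proposition~\ref{prop:unbiasedness_variance_conditional}. That proposition gives
\[
\Var\bigl[\hat\alpha(\tau)\,\bigm|\,\{X_j\}_{j\in\mathcal I_2}\bigr]
= \frac{1}{|\mathcal I_2|^2}\sum_{i\in\mathcal I_2}\Bigl\{w(\{X_j\}_{j\in\Itwo},i)\,p_i-p_i^2\Bigr\},
\]
where I abbreviate $p_i := \mathbb{P}[T_i<\hat q_\tau(X_i)\mid\{X_j\}_{j\in\mathcal I_2}]$. The hypotheses of Proposition~\ref{prop:unbiasedness_variance_conditional} (conditional independence of censoring, and pairwise independence of the $(C_i,T_i)$ given the calibration covariates) are exactly the extra assumptions carried over here, so the formula applies verbatim.

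The single substitution step is to set $p_i = \mathrm{Const}_\tau$ for every $i\in\Itwo$. Then $\sum_{i\in\mathcal I_2} w(\{X_j\}_{j\in\Itwo},i)\,p_i = \mathrm{Const}_\tau\sum_{i\in\mathcal I_2} w(\{X_j\}_{j\in\Itwo},i) = \mathrm{Const}_\tau\cdot|\mathcal I_2|\cdot\overline w_\tau$ by the definition $\overline w_\tau = |\mathcal I_2|^{-1}\sum_{i\in\mathcal I_2} w_\tau(\{X_j\}_{j\in\mathcal I_2},i)$, while $\sum_{i\in\mathcal I_2} p_i^2 = |\mathcal I_2|\,\mathrm{Const}_\tau^2$. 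Dividing by $|\mathcal I_2|^2$ yields
\[
\Var\bigl[\hat\alpha(\tau)\,\bigm|\,\{X_j\}_{j\in\mathcal I_2}\bigr]
= \frac{\mathrm{Const}_\tau}{|\mathcal I_2|}\,\overline w_\tau - \frac{\mathrm{Const}_\tau^2}{|\mathcal I_2|},
\]
which is the claimed identity.

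There is essentially no obstacle here: the proof is a one-line algebraic consequence of a previously stated proposition. The only minor point requiring a word of care is the mild notational mismatch between $w(\{X_j\}_{j\in\Itwo},i)$ as written in Proposition~\ref{prop:unbiasedness_variance_conditional} and the $w_\tau(\{X_j\}_{j\in\mathcal I_2},i)$ appearing in the definition of $\overline w_\tau$; I would simply note that for the prompt-adaptive schemes the weights do not depend on $\tau$ over the relevant search range (as established in Section~\ref{subsec:prompt_adaptive}), so the two notations refer to the same quantity, and the identity is stated for that $\tau$-range. I would also remark that the expression is manifestly nonnegative since $\mathrm{Const}_\tau\le 1$ and $\overline w_\tau\ge 1$ (each weight $1/\pi_i\ge 1$), which is a useful sanity check but not needed for the statement.
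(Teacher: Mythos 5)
Your proposal is correct and follows exactly the paper's own argument: the paper likewise invokes Proposition~\ref{prop:unbiasedness_variance_conditional}, substitutes $\mathbb{P}[T_i<\hat q_\tau(X_i)\mid\{X_j\}_{j\in\mathcal I_2}]=\mathrm{Const}_\tau$, and simplifies the sum to $\frac{\mathrm{Const}_\tau}{|\mathcal I_2|}\overline w_\tau-\frac{\mathrm{Const}_\tau^2}{|\mathcal I_2|}$. Your added remarks on the $w$ versus $w_\tau$ notation and the nonnegativity sanity check are fine but not needed.
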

See proof in Appendix~\ref{appendix:proof_variance_constant_miscoverage}.

\section{Proofs}\label{sec:proofs}

\subsection{Proof of Proposition~\ref{prop:unbiasedness_variance_conditional}}
\label{appendix:proof_unbiasedness}

\begin{proof}
Recall that
\[
\hat\alpha(\tau)
=\frac{1}{|\mathcal I_2|}\sum_{i\in\mathcal I_2}
w(X_i)\,
\1\{\hat q_\tau(X_i)\le C_i\}\,
\1\{T_i<\hat q_\tau(X_i)\}.
\]Set
\[
V_i=w(X_i)\,
\1\{\hat q_\tau(X_i)\le C_i\}\,
\1\{T_i<\hat q_\tau(X_i)\}.
\]
By Assumption~\ref{assumption:conditionally_independent_censoring} each of \(C_i\) and \(T_i\) is independent of the other given \(X_i\).  Hence for each \(i\),
\begin{align*}
\E\bigl[V_i \mid \{X_j\}_{j\in\Itwo}\bigr]
&=\E \bigl[w(\{X_j\}_{j\in \Itwo}, i)\,
  \1\{\hat q_\tau(X_i)\le C_i\}\,
\1\{T_i<\hat q_\tau(X_i)\}\,\bigm|\;\{X_j\}_{j\in\Itwo}\bigr]\\
&=\E \bigl[w(\{X_j\}_{j\in \Itwo}, i)\,
  \1\{\hat q_\tau(X_i)\le C_i\}\,\bigm|\;\{X_j\}_{j\in\Itwo}\bigr]\,
\E \bigl[\1\{T_i<\hat q_\tau(X_i)\}\,\bigm|\;\{X_j\}_{j\in\Itwo}\bigr]\\
&=w(\{X_j\}_{j\in \Itwo}, i)\,
  \mathbb{P}\bigl[\hat q_\tau(X_i)\le C_i\,\bigm|\{X_j\}_{j\in\Itwo}\bigr]\,\mathbb{P} \bigl[T_i<\hat q_\tau(X_i)\,\bigm|\;\{X_j\}_{j\in\Itwo}\bigr]\\
&=\mathbb{P}\bigl[T_i<\hat q_\tau(X_i)\,\bigm|\;\{X_j\}_{j\in\Itwo}\bigr],
\end{align*}
where the first is by the tower property, and in the last transition we used 
\(
w(\{X_j\}_{j\in \Itwo}, i)=\mathbb P\bigl[\hat q_\tau(X_i)\le C_i\bigm| \{X_j\}_{j\in \Itwo}\bigr]^{-1},
\)
from~\eqref{eq:weight_def}.

Consequently,
\begin{align*}
\mathbb{E}[\hat\alpha(\tau)]
&=\mathbb{E}\Bigl[\,
  \tfrac1{|\mathcal I_2|}\sum_{i\in\mathcal I_2}
  V_i
  \Bigr]\\
&=\mathbb{E}\Bigl[\,
  \tfrac1{|\mathcal I_2|}\sum_{i\in\mathcal I_2}
  \E\bigl[V_i \mid \{X_j\}_{j\in\Itwo}\bigr]
  \Bigr]\\
&=\mathbb{E}\Bigl[\,
  \tfrac1{|\mathcal I_2|}\sum_{i\in\mathcal I_2}
  \mathbb{P}\bigl[T_i<\hat q_\tau(X_i)\mid X_i\bigr]
  \Bigr]
=\frac{1}{|\mathcal I_2|}\sum_{i\in\mathcal I_2}
\mathbb{P}\bigl[T_i<\hat q_\tau(X_i)\bigr].
\end{align*} 

For the conditional variance,
we similarly derive
\begin{align*}
\E\bigl[V_i^2 \mid \{X_j\}_{j\in\Itwo}\bigr]
&=\E \bigl[w^2_\tau(\{X_j\}_{j\in \Itwo}, i)\,
  \1\{\hat q_\tau(X_i)\le C_i\}\,
\1\{T_i<\hat q_\tau(X_i)\}\,\bigm|\;\{X_j\}_{j\in\Itwo}\bigr]\\
&=w^2_\tau(\{X_j\}_{j\in \Itwo}, i)\,
  \mathbb{P}\bigl[\hat q_\tau(X_i)\le C_i\,\bigm|\{X_j\}_{j\in\Itwo}\bigr]\,\mathbb{P} \bigl[T_i<\hat q_\tau(X_i)\,\bigm|\;\{X_j\}_{j\in\Itwo}\bigr]\\
&=w_\tau(\{X_j\}_{j\in \Itwo}, i)\mathbb{P}\bigl[T_i<\hat q_\tau(X_i)\,\bigm|\;\{X_j\}_{j\in\Itwo}\bigr].
\end{align*}
Therefore
\begin{align*}
\Var\bigl[V_i\,\bigm|\;\{X_j\}_{j\in\Itwo}\bigr]
=&\E\bigl[V_i^2\,\bigm|\;\{X_j\}_{j\in\Itwo}\bigr]
-\bigl(\E[V_i\,\bigm|\;\{X_j\}_{j\in\Itwo}]\bigr)^2\\
=&\,w(\{X_j\}_{j\in \Itwo}, i)\,
  \mathbb{P}\bigl[T_i<\hat q_\tau(X_i)\,\bigm|\;\{X_j\}\bigr]
-\Bigl[\mathbb{P}\bigl[T_i<\hat q_\tau(X_i)\,\bigm|\;\{X_j\}\bigr]\Bigr]^2.
\end{align*}
Conditioned on \(\{X_j\}_{j\in\mathcal I_2}\), the pairs \((C_i,T_i)\) are independent across \(i\), and so for \(i\neq j\), \[\Cov(V_i,V_j\mid\{X_j\})=0.\] 
Consequently, 
\begin{align*}
\Var\bigl[\hat{\alpha}(\tau)\,\bigm|\;\{X_j\}_{j\in\Itwo}\bigr]=&\Var\Bigl[\tfrac1{|\mathcal I_2|}\sum_{i\in\mathcal I_2}V_i\,\Bigm|\;\{X_j\}_{j\in\Itwo}\Bigr]\\=&\tfrac1{|\mathcal I_2|^2}\sum_{i\in\mathcal I_2}\Bigl\{
     w(\{X_j\}_{j\in \Itwo}, i)\,
     \mathbb{P}\bigl[T_i<\hat q_\tau(X_i)\,\bigm|\;\{X_j\}_{j\in\mathcal I_2}\bigr]
     \\[-0.3em]
    &\qquad\qquad\qquad\qquad-\;
     \bigl[\mathbb{P}\bigl[T_i<\hat q_\tau(X_i)\,\bigm|\;\{X_j\}_{j\in\mathcal I_2}\bigr]\bigr]^2
   \Bigr\}
\end{align*}
\end{proof}

\subsection{Proof of Proposition~\ref{prop:variance_constant_miscoverage}}
\label{appendix:proof_variance_constant_miscoverage}
\begin{proof}[Proof of Proposition~\ref{prop:variance_constant_miscoverage}]
Under the proposition's assumptions, Proposition~\ref{prop:unbiasedness_variance_conditional} shows us that the variance of $\hat\alpha(\tau)$ simplifies to
\begin{align*}
\Var\bigl[\hat\alpha(\tau)\,\bigm|\,\{X_j\}_{j\in\mathcal I_2}\bigr]
&= \frac{1}{|\mathcal I_2|^2}
   \sum_{i\in\mathcal I_2}
   \Bigl\{
     w_\tau(\{X_j\}_{j\in \Itwo}, i)\,
     \mathrm{Const}_\tau
     -
     \mathrm{Const}_\tau^2
   \Bigr\}\\
&=\frac{\mathrm{Const}_\tau}{|\mathcal I_2|}\overline w_\tau-\frac{\mathrm{Const}^2_\tau}{|\mathcal I_2|}
\end{align*}
where $\overline w_\tau=|\mathcal I_2|^{-1}\sum_{i\in\mathcal I_2}w_\tau(\{X_j\}_{j\in\mathcal I_2},i)$ is the mean weight.  
\end{proof}

\subsection{Proof of Proposition~\ref{prop:budget‐tight}}

\begin{proof}[Proof of Proposition~\ref{prop:budget‐tight}]
Since
\[
\frac{\partial}{\partial\pi_i}\Bigl(\frac{1}{|\Itwo|}\sum_{j\in \Itwo}\tfrac1{\pi_j}\Bigr)
=-\frac1{|\Itwo|\,\pi_i^2}<0,
\]
increasing any single $\pi_i$ (while keeping the others fixed) strictly decreases the objective. Assume for the sake of contradiction that there exist an optimal solution for~\eqref{eq:global‐opt}, $\pi^*$, satisfying $\sum_{i\in \Itwo} \hat{f}_{\tauprior}(X_i)\,\pi^*_i < B$. Then, there exist an index $j \in \Itwo$ and $\delta>0$ such that
$\pi^*_j+\delta\le1$ and
$\sum_i \hat{f}_{\tauprior}(X_i)\,\tilde\pi_i\le B$, where
$\tilde\pi_i=\pi^*_i$ for $i\neq j$ and $\tilde\pi_j=\pi^*_j+\delta$.
But then
\[
\frac{1}{|\Itwo|}\sum_{i\in \Itwo} \frac1{\tilde\pi_i}
\;<\;\frac{1}{|\Itwo|}\sum_{i\in \Itwo} \frac1{\pi^*_i},
\]
contradicting optimality of $\pi^*$. Therefore, the budget must bind:
\[
\sum_{i\in\mathcal I_2}\hat{f}_{\tauprior}(X_i)\,\pi^*_i = B.
\]
\end{proof}

\subsection{Proof of Proposition~\ref{prop:unique_lambda}}

\begin{proof}[Proof of Proposition~\ref{prop:unique_lambda}]
Notice that if $\sum_{i\in\mathcal I_2}\hat{f}_{\tauprior}(X_i) \leq B$, then $\pi_i^* =1$ for all $i \in \Itwo$ is the optimal solution which satisfies the constraint. We now turn to consider the case where $\sum_{i\in\mathcal I_2}\hat{f}_{\tauprior}(X_i) > B$.
We first show that there exists a unique solution $\lambda^*$. Observe that $U:(0,\infty)\to(0,\sum_i \hat{f}_{\tauprior}(X_i)]$ is continuous and strictly decreasing, with
\(\lim_{\lambda\to0^+}U(\lambda)=\sum_i \hat{f}_{\tauprior}(X_i)>B\) and \(\lim_{\lambda\to\infty}U(\lambda)=0\).
By the intermediate‐value theorem there is a unique \(\lambda^*>0\) such that \(U(\lambda^*)=B\). 
Next, the output of Algorithm~\ref{alg:SolveOptimization} is indeed the solution $\lambda^*$ since $U$ is a monotone function, and thus the bisection approach is valid in this case. Therefore, $\pi^*(\lambda)$ is the correct optimal solution.
\end{proof}

\subsection{Proof of Proposition~\ref{prop:min_prob_bound}}



\begin{proof}[Proof of Proposition~\ref{prop:min_prob_bound}]
First, notice that by assuming $\sum_{i\in\mathcal I_2}\hat{f}_{\tauprior}(X_i) \leq B$, is follows that $\pi_i^* =1$ for all $i \in \Itwo$ is the optimal solution which satisfies the constraint. We now turn to analyze the setup where $\sum_{i\in\mathcal I_2}\hat{f}_{\tauprior}(X_i) > B$.
We prove by induction over the sum $\sum_{i \in \Itwo} \hat{f}_{\tauprior}(X_i)$, assuming that $\hat{f}_{\tauprior}(X_i) \leq M \ \forall i \in \Itwo$.

\textbf{Base case.}
We begin with the maximal sum: $\sum_{i \in \Itwo} \hat{f}_{\tauprior}(X_i)= |\Itwo| M$. That is, $\hat{f}_{\tauprior}(X_i)=M$ for all $i\in\mathcal I_2$. Then, by the optimization constraint, we get
\[
M\sum_{i\in\mathcal I_2}\pi^*_i = B
\qquad\Longrightarrow\qquad
\sum_{i\in\mathcal I_2}\pi^*_i = \frac{B}{M},
\]
and by symmetry the unique minimizer of $\sum_i 1/\pi^*_i$ with $\sum_i\pi^*_i=B/M$ is
\[
\pi^*_i = \frac{B}{n\,M}\quad\forall i\in\Itwo.
\]

\textbf{Inductive step.}
We now suppose that the claim holds for any weight vector with a sum $B-1< S \leq M |\Itwo|$, and show that it holds for any vector with a sum $S-1$.
Given a weight vector $\{\hat{f}_{\tauprior}(X_i)\}_{i \in \Itwo}$ with a sum $\sum_{i \in \Itwo} \hat{f}_{\tauprior}(X_i)= S-1$, we compose a new vector with a sum $S$, as follows. In $\{\hat{f}_{\tauprior}(X_i)\}_{i \in \Itwo}$, there exists an index $i_0$ such that $\hat{f}_{\tauprior}(X_{i_0}) \leq M-1$. The new vector with a sum $S$ is defined by:
\[
\hat{f}'_{\tauprior}(X_i) = 
\begin{cases}
\hat{f}_{\tauprior}(X_i), & i\neq i_0,\\
\hat{f}_{\tauprior}(X_i)+1, & i=i_0,
\end{cases}
\]
Notice that $\max_i \hat{f}'_{\tauprior}(X_i) \leq M$ and $\sum_{i \in \Itwo} \hat{f}'_{\tauprior}(X_i)= S$. Let $\lambda$ and $\lambda'$ be the unique solutions of
\[
U(\lambda)=B
\quad\text{and}\quad
U'(\lambda')=B,
\]
where
\[
U'(\lambda)
=\sum_{i\neq i_0}\hat{f}_{\tauprior}(X_i)\,\min\!\Bigl\{1,\tfrac{1}{\sqrt{n\lambda \hat{f}_{\tauprior}(X_i)}}\Bigr\}
+(\hat{f}_{\tauprior}(X_{i_0})+1)\,\min\!\Bigl\{1,\tfrac{1}{\sqrt{n\lambda(\hat{f}_{\tauprior}(X_{i_0})+1)}}\Bigr\}.
\]
Since for every $\lambda>0$ we have 
\[
\frac{\hat{f}_{\tauprior}(X_{i_0})+1}{\sqrt{|\Itwo|\lambda(\hat{f}_{\tauprior}(X_{i_0})+1)}}=\frac{\sqrt{\hat{f}_{\tauprior}(X_{i_0})+1}}{\sqrt{|\Itwo|\lambda}} > \frac{\sqrt{\hat{f}_{\tauprior}(X_{i_0})}}{\sqrt{|\Itwo|\lambda}}=\frac{\hat{f}_{\tauprior}(X_{i_0})}{\sqrt{|\Itwo|\lambda\hat{f}_{\tauprior}(X_{i_0})}},
\]
and of course $\hat{f}_{\tauprior}(X_{i_0})+1>\hat{f}_{\tauprior}(X_{i_0})$, we get the inequality
\[
(\hat{f}_{\tauprior}(X_{i_0})+1)\,\min\!\Bigl\{1,\tfrac{1}{\sqrt{|\Itwo|\lambda(\hat{f}_{\tauprior}(X_{i_0})+1)}}\Bigr\}>\hat{f}_{\tauprior}(X_{i_0})\,\min\!\Bigl\{1,\tfrac{1}{\sqrt{|\Itwo|\lambda\hat{f}_{\tauprior}(X_{i_0})}}\Bigr\},
\]
and thus,
\begin{align*}
    U'(\lambda) =&\sum_{i\neq i_0}\hat{f}_{\tauprior}(X_i)\,\pi_i^*(\lambda) +(\hat{f}_{\tauprior}(X_{i_0})+1)\,\min\!\Bigl\{1,\tfrac{1}{\sqrt{|\Itwo|\lambda(\hat{f}_{\tauprior}(X_{i_0})+1)}}\Bigr\}\\ 
    >& \sum_{i\neq i_0}\hat{f}_{\tauprior}(X_i)\,\pi_i^*(\lambda) +\hat{f}_{\tauprior}(X_{i_0})\,\min\!\Bigl\{1,\tfrac{1}{\sqrt{|\Itwo|\lambda\hat{f}_{\tauprior}(X_{i_0})}}\Bigr\}\\
    =& U(\lambda) = B.
\end{align*}
Since $U'$ is continuous and decreasing on $(0,\infty)$ with 
\(\lim_{\mu\to0^+}U'(\mu)=\sum_i \hat{f}_{\tauprior}(X_i) >B\)
and \(\lim_{\mu\to\infty}U'(\mu)=0\),
the intermediate‐value theorem guarantees a unique $\lambda'$ with $U'(\lambda')=B$.  Moreover, since $U'$ is decreasing, and $U'(\lambda)\geq B=U'(\lambda')$ we get $\lambda' \geq \lambda$.
Define
\(\pi_i\) the optimal solution of~\eqref{eq:global‐opt} with $\hat{f}_{\tauprior}(X_i)$ and \(\pi'_i\) the optimal solution with $\hat{f}'_{\tauprior}(X_i)$.

- If \(i\neq i_0\), since \(\hat{f}'_{\tauprior}(X_i)=\hat{f}_{\tauprior}(X_i)\),
  \[
  \pi'_i
  =\min\!\Bigl\{1,\tfrac{1}{\sqrt{|\Itwo|\,\lambda'\,\hat{f}'_{\tauprior}(X_i)}}\Bigr\}
  \;\leq\;
  \min\!\Bigl\{1,\tfrac{1}{\sqrt{|\Itwo|\,\lambda\,\hat{f}_{\tauprior}(X_i)}}\Bigr\}
  =\pi_i.
  \]
- If \(i=i_0\), then \(\hat{f}'_{\tauprior}(X_{i_0})=\hat{f}_{\tauprior}(X_{i_0})+1>\hat{f}_{\tauprior}(X_{i_0})\), so
  \[
  \pi'_{i_0}
  =\min\!\Bigl\{1,\tfrac{1}{\sqrt{|\Itwo|\,\lambda'\,(\hat{f}_{\tauprior}(X_{i_0})+1)}}\Bigr\}
  \;\leq\;
  \min\!\Bigl\{1,\tfrac{1}{\sqrt{|\Itwo|\,\lambda'\,\hat{f}_{\tauprior}(X_{i_0})}}\Bigr\}
  \;\leq\;
  \min\!\Bigl\{1,\tfrac{1}{\sqrt{|\Itwo|\,\lambda\,\hat{f}_{\tauprior}(X_{i_0})}}\Bigr\}
  =\pi_{i_0}.
  \]
By the inductive assumption, \(\pi'_i\ge B/(|\Itwo|M)\) for all $i \in \Itwo$, hence \(\pi_i\ge B/(|\Itwo|M)\) for all $i \in \Itwo$ as well. Lastly, by the definition of the weights, we have:
\begin{equation}
w(\{X_j\}_{j\in \Itwo}, i)= ({\pi_i^*})^{-1} \leq \max \left( \frac{|\Itwo|M}{B},1 \right).
\end{equation}
This completes the proof.
\end{proof}

\subsection{Proofs of the Coverage Rate Guarantees}\label{sec:validity_proof}
In this section, we present and prove the formal version Theorem~\ref{thm:general_validity} which builds on the proof of~\citep[Theorem 3]{gui2024conformalized}.

\begin{theorem}[General validity, formal]
\label{thm:general_validity_formal}
Fix a tolerance level $\delta \in \left(0,1\right)$ and a miscoverage level $\tau\in \left(0,1\right)$. Suppose that $\{(X_i, T_i)\}_{i =1,...,n}$ and $(\Xtest, \Ttest)$ are drawn i.i.d., and that the censoring times satisfy the conditional independence assumption (Assumption~\ref{assumption:conditionally_independent_censoring}) and $ (C_i, T_i) \indep (C_j, T_j) | \{X_k\}_{k \in \Itwo} $ for all $i\ne j \in \Itwo$. We suppose that \(\hat{q}_\tau(x)\) is non-decreasing and continuous in \(\tau\). Further, assume that the weights are computed using the true probabilities:
\begin{equation}
w_\tau(\{X_j\}_{j\in\Itwo},i) = 1/{\mathbb P}\bigl[\hat q_\tau(X_i)\le C_i \bigm| \{X_j\}_{j\in\Itwo} \bigr].
\end{equation}
Above, $\hat q_\tau(X_i)$ can be either the estimated quantile of $T_i \mid X=X_i$ presented in Section~\ref{sec:related}, or its trimmed version $\hat{f}_{\tau}(X_i)$ from Section~\ref{subsec:prompt_capped}.
We assume that there exists a constant \({\gamma}_\tau>0\) such that the weights satisfy \({w}_\tau(x)\leq {\gamma}_{\tau}\) for $P_X$-almost all \(x\). Consider the following estimated miscoverage rate:
\begin{equation}
    \hat{\alpha}(\tau) = \frac{1}{|\Itwo|}\sum_{i\in\mathcal I_2} w_\tau(\{X_j\}_{j\in \Itwo}, i)\;
\mathbb{I}\bigl\{\tilde{T}_i<\hat q_\tau(X_i) \le C_i\bigr\}
\end{equation}
and denote the calibrated quantile level by 
\begin{equation}
\hat{\tau} = \sup\Bigl\{\tau \in \mathcal{T} : \sup_{\substack{\tau' \in \mathcal{T} \\ \tau' \le \tau}} \hat{\alpha}(\tau') \le \tautarget\Bigr\}.
\end{equation} 
Above, the search space $\mathcal{T}$ is formulated as in Section~\ref{sec:calibration}, or using $\mathcal{T} \cap [0,\tauprior]$, as in the proposed \texttt{Adaptive} method from Section~\ref{subsec:prompt_adaptive}.
We remark that for $\hat{\tau}$ to be well defined, we assume there exists $\tau' \in \mathcal{T}$ such that $\hat{\alpha}(\tau') \leq \alpha$ . This assumption can be trivially satisfied by setting $\hat{q}_{0}(X_i)=0$.
Then, with probability at least $1-\delta$ over the draws of $\mathcal{D}$, the LPB $\hat{L}(x) = \hat{q}_{\hat{\tau}}(x)$ satisfies
\begin{equation}
\mathbb{P}\left[\Ttest\geq \hat{L}(\Xtest)|\mathcal{D}\right] \geq 1 - \alpha -  \sup_{\tau \in [0,1]} \left\{ \sqrt{ \frac{ 2\gamma^2_{\tau} + 5  }{|\Itwo|}  \cdot  \log \left(\frac{1}{\delta} \right) }\right\}.
\end{equation}
\end{theorem}

\begin{proof}
For ease of notation, we define the coverage gap by:
\begin{equation}
\Delta := \sup_{\tau \in [0,1]} \left\{ \sqrt{ \frac{ 2\gamma^2_{\lambda} + 5  }{|\Itwo|}  \cdot  \log \left(\frac{1}{\delta} \right)} \right\}.
\end{equation}
We define the oracle miscoverage level by:
\begin{equation}
    \tau(\alpha + \Delta) = \sup \left\{\lambda \in [0,1] : \mathbb{P} \left(T < \hat{q}_\lambda(X) \mid \Ione \right) \leq \alpha + \Delta \right\}.
\end{equation}
Observe that by assuming $1 - \delta \leq \mathbb{P}(\hat{\tau} \leq \tau(\alpha + \Delta) \mid \Ione)$, we get that the event $\{\hat{\tau} \leq \tau(\alpha + \Delta) \}$ holds with probability at least $1-\delta$. Under the monotonicity of $\hat{q}_\tau$, and following the the left-continuity
of $\mathbb{P}(T \geq \hat{q}_{\tau}(X) \mid \mathcal{D})$ in $\tau$ we obtain that with probability at least $1-\delta$:
\begin{equation}
\begin{split}
    &\mathbb{P}(T \geq \hat{q}_{\hat{\tau}}(X) \mid \mathcal{D})\\
     \geq & \mathbb{P}(T \geq \hat{q}_{\tau(\alpha + \Delta)}(X) \mid \mathcal{D})\\
     \geq & 1- \alpha - \Delta.
\end{split}
\end{equation}
Now, we turn to show that $1 - \delta \leq \mathbb{P}(\hat{\tau} \leq \tau(\alpha + \Delta) \mid \Ione)$. We begin by fixing $\varepsilon > 0$, and denoting $\lambda := \tau(\alpha + \Delta) + \varepsilon$. By the definition of $\hat{\alpha}(\tau)$, we get: 
\begin{equation}\label{eq:fused_w_1}
\begin{split}
    &\mathbb{P}(\hat{\alpha}(\tau(\alpha + \Delta) + \varepsilon) \leq \alpha \mid \Ione)\\
    =&\mathbb{P}(\hat{\alpha}(\lambda) \leq \alpha \mid \Ione)\\
     = & \mathbb{P}\left(\frac{1}{|\Itwo|}\sum_{i\in\mathcal I_2}
w_{\lambda}(\{X_j\}_{j\in \Itwo}, i)\;
\mathbb{I}\bigl\{\tilde{T}_i<\hat q_\tau(X_i) \le C_i\bigr\} \leq \alpha \bigg|\Ione\bigg.\right)\\
     =&\mathbb{P}\left(\frac{1}{|\Itwo|}\sum_{i\in\mathcal I_2}
w_{\lambda}(\{X_j\}_{j\in \Itwo}, i)\;
\mathbb{I}\bigl\{\hat q_\lambda(X_i)\le C_i\bigr\}\;
\mathbb{I}\bigl\{T_i<\hat q_\lambda(X_i)\bigr\} - \alpha \leq
     0 \bigg|\Ione\bigg.\right)
\end{split}
\end{equation}
Next, we apply Markov’s inequality for any $t>0$, and obtain:
\begin{equation}\label{eq:fused_w_2}
\begin{split}
\eqref{eq:fused_w_1} & \leq \mathbb{E}\left(\exp\left(\alpha - \frac{t}{|\Itwo|}\sum_{i\in\mathcal I_2}
w_{\lambda}(\{X_j\}_{j\in \Itwo}, i)\;
\mathbb{I}\bigl\{\hat q_\lambda(X_i)\le C_i\bigr\}\;
\mathbb{I}\bigl\{T_i<\hat q_\lambda(X_i)\bigr\}\right) \bigg|\Ione \bigg.\right)
\end{split}
\end{equation}
By conditioning on $\{(X_i,T_i)\}_{i \in \Itwo}$ and following the $\frac{1}{4}$ sub-gaussianity of $w_{\lambda}(\{X_j\}_{j\in \Itwo}, i)^{-1} - \mathbb{I}\{ \hat{q}_{\lambda}(X_i) \leq C_i\}$, the conditionally independent censoring assumption (Assumption~\ref{assumption:conditionally_independent_censoring}) and the bounded weights, we have:
\begin{equation}\label{eq:fused_w_5}
\begin{split}
\mathbb{E}&\left(\exp\left(\frac{t}{|\Itwo|} \cdot \sum_{i\in \Itwo}w_{\lambda}(\{X_j\}_{j\in \Itwo}, i) \mathbb{I}\bigl\{T_i<\hat q_\lambda(X_i)\bigr\} \left(w_{\lambda}(\{X_j\}_{j\in \Itwo}, i)^{-1} - \mathbb{I}\{ \hat{q}_{\lambda}(X_i) \leq C_i\} \right) \right) \bigg| \{(X_i,T_i)\}_{i \in \Itwo}, \Ione \bigg.\right)\\
 \leq & \exp\left(\frac{t^2}{8|\Itwo|^2} \cdot \sum_{i\in \Itwo}w_{\lambda}(\{X_j\}_{j\in \Itwo}, i)^2 \mathbb{I}\bigl\{T_i<\hat q_\lambda(X_i)\bigr\}^2 \right)\\
 \leq & \exp\left(\frac{t^2}{8|\Itwo|^2} \cdot \sum_{i\in \Itwo}w_{\lambda}(\{X_j\}_{j\in \Itwo}, i)^2 \right)\\
 \leq & \exp\left(\frac{ \gamma^2_{\lambda} t^2}{8|\Itwo|} \right).
\end{split}
\end{equation}

Therefore,
\begin{equation}\label{eq:fused_w_4}
\begin{split}
\eqref{eq:fused_w_2} & \leq \exp\left(\frac{ \gamma^2_{\lambda} t^2}{8|\Itwo|} \right) \cdot \mathbb{E}\left(\exp\left( \alpha - \frac{t}{|\Itwo|}\sum_{i\in\mathcal I_2}
w_{\lambda}(\{X_j\}_{j\in \Itwo}, i)\;
w_{\lambda}(\{X_j\}_{j\in \Itwo}, i)^{-1}\;
\mathbb{I}\bigl\{T_i<\hat q_\lambda(X_i)\bigr\} \right) \bigg|\Ione \bigg.\right) \\
& =\exp\left(\frac{ \gamma^2_{\lambda} t^2}{8|\Itwo|}  \right) \cdot \mathbb{E}\left(\exp\left(\alpha - \frac{t}{|\Itwo|}\sum_{i\in\mathcal I_2}
\mathbb{I}\bigl\{T_i<\hat q_\lambda(X_i)\bigr\} \right) \bigg|\Ione \bigg.\right)
\end{split}
\end{equation}

Further, defining $p_{\lambda}(\{X_j\}_{j\in \Itwo}, i):= \mathbb{P}(T_i < \hat{q}_{\lambda}(X_i) \mid \{X_j\}_{j\in \Itwo}, \Ione)$, we condition on $\{X_j\}_{j\in \Itwo}$ and get, using the $\frac{1}{4}$ subgaussianity of $p_{\lambda}(\{X_j\}_{j\in \Itwo}, i) - \mathbb{I}\bigl\{T_i<\hat q_\lambda(X_i)\bigr\}$:
\begin{equation}
\begin{split}
\mathbb{E}&\left(\exp\left(p_{\lambda}(\{X_j\}_{j\in \Itwo}, i)) - \frac{t}{|\Itwo|}\sum_{i\in\mathcal I_2}
(\mathbb{I}\bigl\{T_i<\hat q_\lambda(X_i)\bigr\} \right) \Bigg|\{X_j\}_{j\in \Itwo}, \Ione \bigg.\right) \\
\leq & \exp\left(\frac{t^2}{8|\Itwo|^2} \cdot |\Itwo| \right)\\
=&\exp\left(\frac{t^2}{8|\Itwo|} \right)
\end{split}
\end{equation}

We plug this into~\eqref{eq:fused_w_4} to obtain:
\begin{equation}\label{eq:fused_w_6}
\begin{split}
&\eqref{eq:fused_w_4} \leq\exp\left(\frac{ (\gamma^2_{\lambda}+1) t^2}{8|\Itwo|}   \right) \mathbb{E}\left(\exp\left(\frac{t}{|\Itwo|} \cdot \sum_{i\in \Itwo} (\alpha - p_{\lambda}(X_i) ) \right) \bigg|\Ione \bigg.\right) 
\end{split}
\end{equation}
We now apply the Cauchy-Schwarz inequality:
\begin{equation}\label{eq:fused_w_7}
\begin{split}
& \mathbb{E}\left(\exp\left(\frac{t}{|\Itwo|} \cdot \sum_{i\in \Itwo}( \alpha - p_{\lambda}(X_i) ) \right) \bigg|\Ione \bigg.\right) \\
\leq &  \mathbb{E}\left(\exp\left(\frac{2t}{|\Itwo|} \cdot \sum_{i\in \Itwo}( \alpha - p_{\lambda}(X_i) ) \right) \bigg|\Ione \bigg.\right)^{1/2}
\end{split}
\end{equation}
By the definition of $\tau(\alpha + \Delta)$ we have:
\begin{equation}
\mathbb{P}(T < \hat{q}_\lambda (X) \mid \Ione) \geq \alpha + \Delta.
\end{equation}
By plugging it into the above, we get:
\begin{equation}\label{eq:fused_w_8}
\begin{split}
\eqref{eq:fused_w_7} & = \mathbb{E}\left(\exp\left(\frac{2t}{|\Itwo|} \cdot \sum_{i\in \Itwo}( \alpha - p_{\lambda}(X_i) ) \right) \bigg|\Ione \bigg.\right) \\
\leq & \mathbb{E}\left(\exp\left(\frac{2t}{|\Itwo|} \cdot \sum_{i\in \Itwo}( \mathbb{P}(T < \hat{q}_\lambda (X) \mid \Ione) - \Delta - p_{\lambda}(X_i) ) \right) \bigg|\Ione \bigg.\right)\\
= & \exp(-2t\Delta) \mathbb{E}\left(\exp\left(\frac{2t}{|\Itwo|} \cdot \sum_{i\in \Itwo}( \mathbb{P}(T < \hat{q}_\lambda (X) \mid \Ione) - p_{\lambda}(X_i) ) \right) \bigg|\Ione \bigg.\right)\\
\leq & \exp\left(\frac{ t^2}{2|\Itwo|}-2t\Delta \right).
\end{split}
\end{equation}
Above, we used the $\frac{1}{4}$-sub-gaussianity of $ \mathbb{P}(T < \hat{q}_\lambda (X) \mid \Ione) - p_{\lambda}(X_i) $. By combining it all, we obtain:
\begin{equation}\label{eq:fused_w_9}
\begin{split}
\eqref{eq:fused_w_1}  \leq & \exp\left(\frac{t^2}{2|\Itwo|} - 2t\Delta + \frac{ (\gamma^2_{\lambda}+1) t^2}{8|\Itwo|} + \frac{ \gamma^2_{\lambda} t^2}{8|\Itwo|}\right)\\
=&\exp\left(\frac{ (2\gamma^2_{\lambda}+5) t^2}{8|\Itwo|} - 2t\Delta\right)
\end{split}
\end{equation}
We define
\begin{equation}
    t = \frac{(8+2\sqrt{14})|\Itwo|\Delta}{2\gamma^2_{\lambda}  + 5}
\end{equation}
and plug it into \eqref{eq:fused_w_6}:
\begin{equation}
\begin{split}
\eqref{eq:fused_w_6}  \leq & \exp\left(\frac{ (2\gamma^2_{\lambda}+5) (\frac{(8+2\sqrt{14})|\Itwo|\Delta}{2\gamma^2_{\lambda}  + 5})^2}{8|\Itwo|} - \frac{(16+4\sqrt{14})|\Itwo|\Delta}{2\gamma^2_{\lambda}  + 5}\Delta\right) \\
= & \exp\left(\frac{ (15+4\sqrt{14})|\Itwo|\Delta^2}{(2\gamma^2_{\lambda}+5) } - \frac{(16+4\sqrt{14})}{2\gamma^2_{\lambda}  + 5}|\Itwo|\Delta^2\right)  \\
= & \exp\left(-\frac{1|\Itwo|\Delta^2}{2\gamma^2_{\lambda}  + 5}\right) \\
  \leq & \exp\left( -\log \left(\frac{1}{\delta} \right)  \right) \\
    = &\delta \\
\end{split}
\end{equation}
That is, we just showed that:
\begin{equation}
    1- \delta \leq \mathbb{P}(\hat{\alpha}(\tau(\alpha + \Delta) + \varepsilon) > \alpha \mid \Ione) \leq \mathbb{P}( \hat{\tau} < \tau(\alpha + \Delta) + \varepsilon \mid \Ione).
\end{equation}
The above equation holds for every $\varepsilon > 0$, and thus by taking $\varepsilon \rightarrow 0$, and following the continuity of the probability measure, we obtain that $1 - \delta \leq \mathbb{P}(\hat{\tau} \leq \tau(\alpha + \Delta) \mid \Ione)$, as required.
\end{proof}
Since the proposed methods construct censoring times that satisfy the conditional independence assumption of Theorem~\ref{thm:general_validity_formal}, their validity follow immediately from this theorem with the corresponding definitions of the censoring times and the weights.

\section{Experimental Details}
\label{sec:details}

This section provides additional information about the experimental setup and methodology described in Section~\ref{sec:exp}. Section~\ref{subsec:synthetic_experiments} details the synthetic experiments, including data generation and model architecture. Section~\ref{subsec:real_data_experiments} presents experiments conducted on real-world data. Finally, Section~\ref{sec:training} outlines implementation details that are shared across both experimental settings.

\subsection{Synthetic Experiments}
\label{subsec:synthetic_experiments}

\paragraph{Synthetic Covariate Generation.} As described in Section~\ref{subsec:synth} of the main text, we use synthetic data to evaluate the informativity and coverage of the LPBs constructed by our calibration procedures. Each synthetic data point consists of covariates $X_i \in \mathbb{R}^{d}$ with $d=10$, and a prompt-specific unsafe-output probability $p_i$.
The dataset is designed to simulate a setting in which most prompts have a high likelihood of producing unsafe outputs, while a smaller subset is relatively safe.
Notably, our experiments show that fitting a model on this synthetic dataset results in a large prediction error, which turns the calibration step more challenging.

To this end, we sample the covariates from a Gaussian distribution:
\begin{equation}
    \label{eq:synth_x}
X_i \mid p_i \sim \mathcal{N}\left(\mu(p_i), \sigma^2 I_d\right), \quad \text{with } \sigma = 0.1.
\end{equation}
where the mean vector $\mu(p_i)\in \mathbb{R}^d$ is designed as follows. First, we construct a pool of $p_i$ values, where (i) 90\% of which are drawn uniformly in the range $\log_{10}(p_i) \sim \mathcal{U}[-4, -3]$, and (ii) the remaining 10\% are drawn uniformly in the range $\log_{10}(p_i) \sim \mathcal{U}[-6, -5]$. We then assign each $p_i$ to a data point by sampling without replacement from this pool.
Then, we define $d$ quantile levels as follows
$$
\tau_j = 0.1 + 0.8\,\frac{j-1}{d-1} \ \ \ j=1,\ldots,d.
$$
Next, for a given probability $p_i$, we compute:
$$
\tilde\mu(p_i)
=
\bigl[F^{-1}_{\mathrm{Geom}(p_i)}(\tau_1)^{1/4},\,
\ldots,\,
F^{-1}_{\mathrm{Geom}(p_i)}(\tau_d)^{1/4}\bigr]^\top,
$$
where $F^{-1}_{\mathrm{Geom}(p_i)}$ is the quantile function of the Geometric distribution with success probability $p_i$. The $1/4$ power transformation reduces the size of the often extremely high raw quantiles. Lastly, to improve numerical stability, we normalize this vector by the average magnitude across all training examples:
$$
\bar\mu
=
\frac{1}{n\,d}\sum_{i=1}^n\sum_{j=1}^d
F^{-1}_{\mathrm{Geom}(p_i)}(\tau_j)^{1/4}, \quad
\mu(p_i) = \frac{\tilde\mu(p_i)}{\bar\mu}.
$$

\paragraph{Model Architecture and Training.}
To estimate the unsafe generation probabilities, we train a neural network with four hidden layers of size 32 each. We used ReLU for non-linearity and a sigmoid for output. We optimize the BCE loss (see Section~\ref{sec:training}) using AdamW with learning rate \(10^{-4}\), weight decay \(10^{-5}\), batch size $100$, for $10$ epochs.

\paragraph{Calibration Method Parameters.}
In all experiments, unless stated otherwise, we used the prior quantile $\tau_{\mathrm{prior}}=10^{-1/4}$, with trimming threshold $M$ set so that $\gamma=\max\left({|\Itwo| \cdot M}/{B}, 1\right)=10$.

\subsubsection{Additional Experiments}
\label{sec:additional_experiments}

To confirm that our findings on synthetic data do not depend on a particular choice of calibration/test split, we repeated the experiment from Section~\ref{subsec:synth} for 20 independent splits of calibration and test sets. Specifically, in each trial, we fixed the original training set of 45,000 prompts, then randomly partitioned the remaining pool into a calibration set of size 45,000 and a test set of size 10,000. All other settings follow the experimental setup described in Section~\ref{subsec:synthetic_experiments}.

We repeated the experiment for the following values of average sampling budget per prompt $B/|I_2|$: ${10,25,50,100,200,300,600,1200}$. In all experiments, we used the same prior quantile $\tau_{\mathrm{prior}}=10^{-1/4}$, with trimming threshold $M$ set so that $\gamma=\max\left({|\Itwo| \cdot M}/{B}, 1\right)=10$, as in Section~\ref{subsec:synth}. 

Figure~\ref{fig:budget_effect_resplit} presents the performance of the methods introduced in this work for 20 data splits. Across every budget level, the performances closely match those from our original fixed‐split experiment from Figure~\ref{fig:budget_effect} in Section~\ref{subsec:synth}. 
\begin{figure}[ht]
    \centering
    \includegraphics[width=\textwidth]{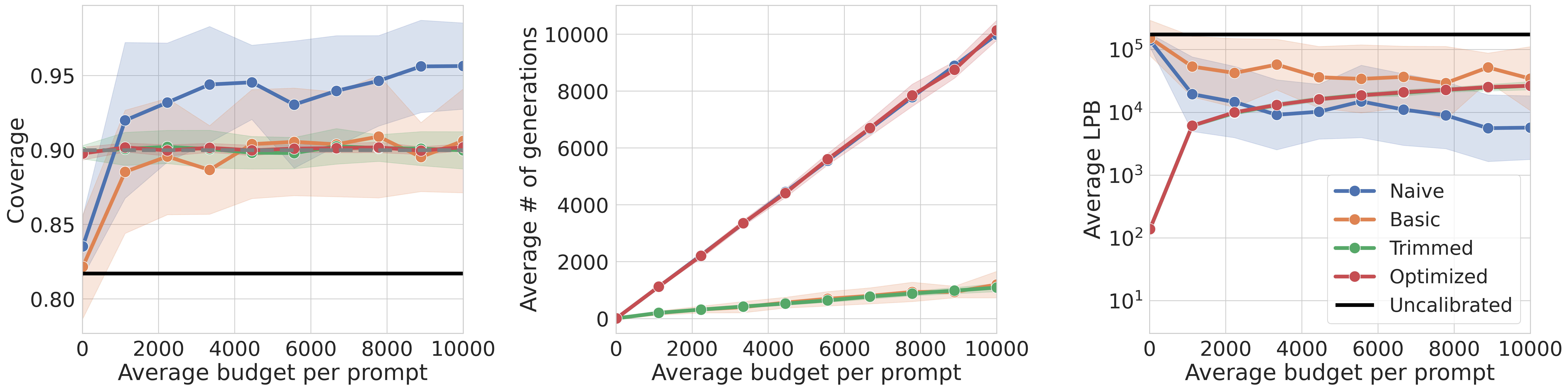}
    \caption{Results of synthetic experiments as a function of average budget per prompt \(B/|\Itwo|\). 
    {\bf Left:} Coverage (target 90\%; gray dashed line). 
    {\bf Center:} Mean number of samplings generated per prompt. 
    {\bf Right:} Mean LPB. 
    Shaded regions represent the standard deviation over 20 runs.
    \label{fig:budget_effect_resplit}
    }
\end{figure}
Next, we evaluate how the target coverage level affects the constructed LPBs and their informativeness by repeating the experiment from Section~\ref{subsec:synth} across different nominal coverage levels. We present the results in Figure~\ref{fig:budget_effect_multi_level}. This figure shows that the \texttt{Optimized} consistently achieves its target coverage across all levels.
\begin{figure}[ht]
    \centering
    \includegraphics[width=\textwidth]{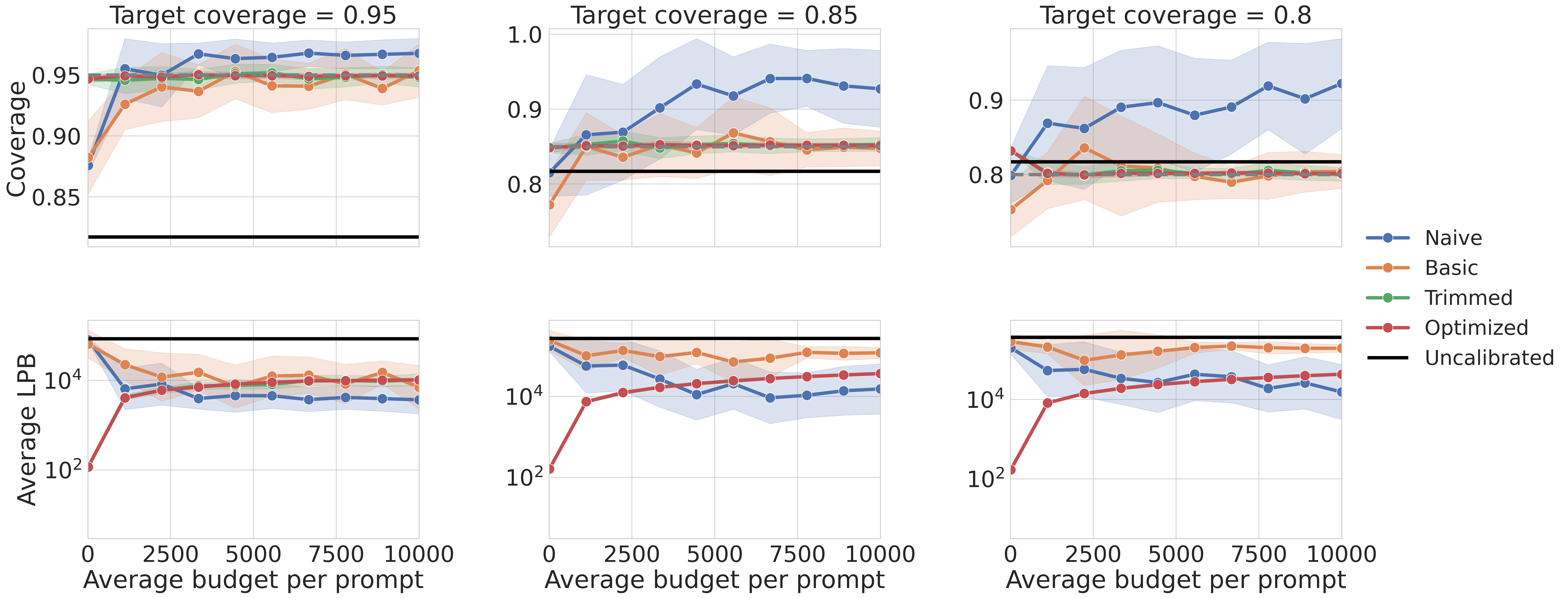}
    \caption{Results of synthetic experiments as a function of average budget per prompt \(B/|\Itwo|\). 
    {\bf Top:} Coverage (target level indicated by a gray dashed line). 
    {\bf Bottom:} Mean LPB. 
    Shaded regions represent the standard deviation over 20 runs.
    \label{fig:budget_effect_multi_level}
    }
\end{figure}

Finally, we turn to study the effect of the maximal weight size $\gamma=\max\left({|\Itwo| \cdot M}/{B}, 1\right)$ on the stability and power of the \texttt{Trimmed} and \texttt{Optimized} calibration methods. Figure~\ref{fig:min_sample_effect} compares the two methods under a fixed per‐prompt budget of ${B}/{|\Itwo|} = 1000$. For each weight size, we set the threshold $M={B \gamma}/{|\Itwo|}$. As can be seen in this figure, the size of the LPBs we construct increases with $\gamma$. However, this comes at the cost of increasing the coverage variance. Observe how the variance of \texttt{Optimized} method for $\gamma = 100$ is similar to that of \texttt{Trimmed} method for $\gamma = 10$, but the former yields much more informative LPBs under these corresponding $\gamma$ values. This is a result of the variance-reducing optimization objective in~\eqref{eq:global‐opt}.

\begin{figure}[ht]
  \centering
  \includegraphics[width=0.55\linewidth]{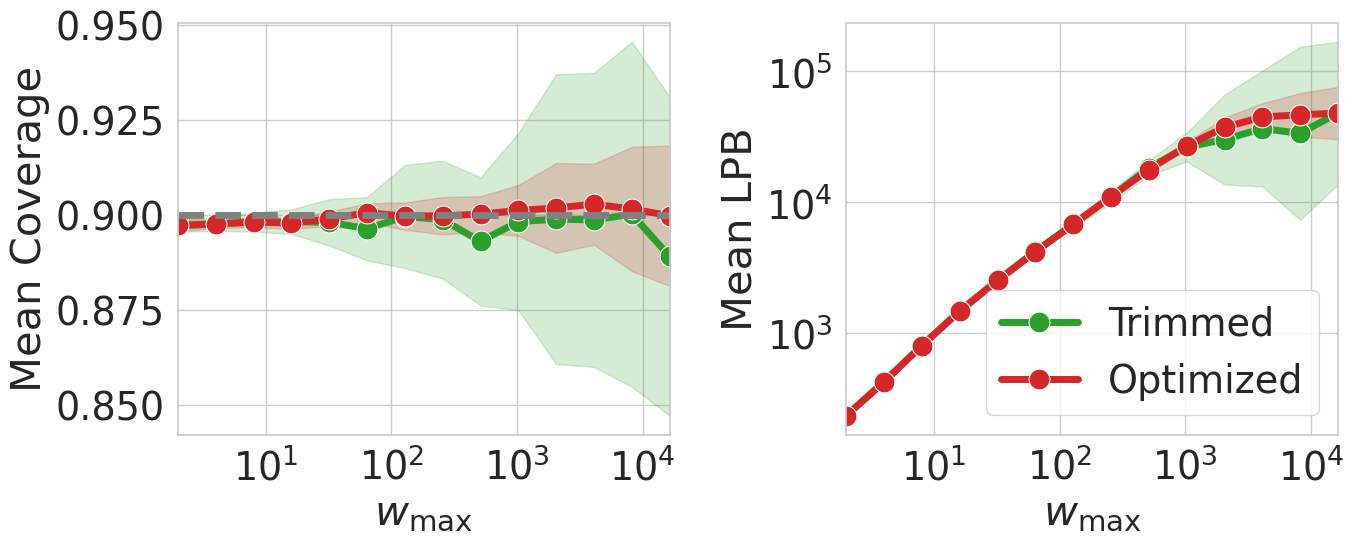}
  \caption{Results of synthetic experiments as a function of the maximum weight $w_\text{max}$ of the \texttt{Trimmed} and \texttt{Optimized} methods. \textbf{Left}: Coverage (target 90\%; gray dashed line). \textbf{Right}: Average LPB. Shaded regions present the standard deviation across $20$ runs.}
  \label{fig:min_sample_effect}
\end{figure}

We compare the proposed \texttt{Optimized} method with an oracle quantile regression model on our synthetic dataset. The oracle model outputs the true conditional quantile of $T \mid X$ as the LPB, and therefore achieves the target coverage rate of $1-\alpha = 90\%$ by definition. These oracle LPBs are also the most informative possible by construction. Figure~\ref{fig:oracle_q} reports the performance of the \texttt{Uncalibrated} quantile regression model, \texttt{Naive}, \texttt{Basic}, \texttt{Trimmed}, \texttt{Optimized}, and the oracle quantile regression model. As expected, the oracle model achieves the nominal coverage level. Notably, our \texttt{Optimized} method produces LPBs of comparable size to the oracle as the budget increases, demonstrating its efficiency in approaching the best achievable performance even without access to the oracle quantiles.
\begin{figure}[ht]
    \centering
    \includegraphics[width=\textwidth]{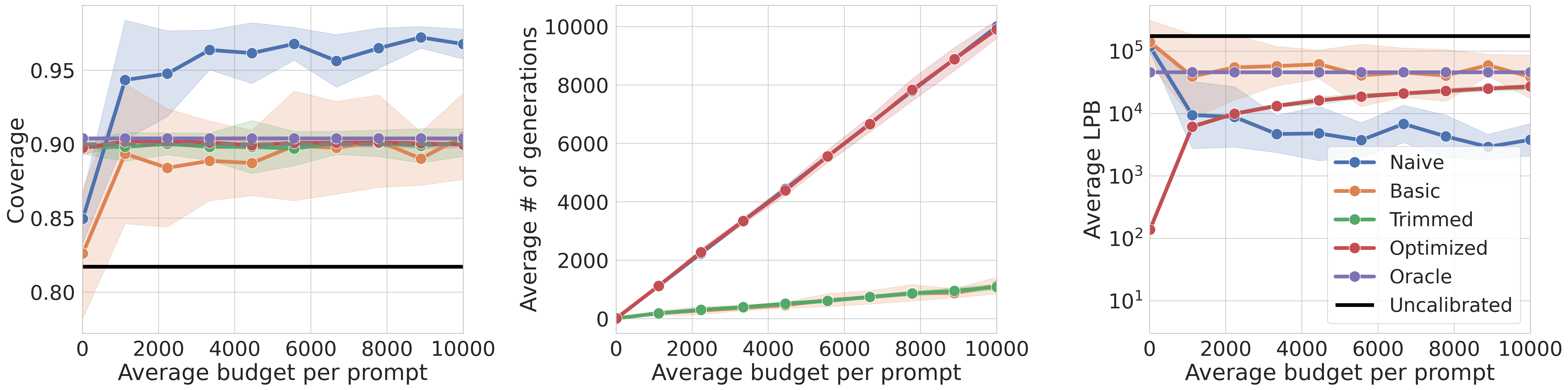}
    \caption{Synthetic data experiment with an oracle quantile regression model. 
    {\bf Top:} Coverage (target $90\%$\%; gray dashed line). 
    {\bf Bottom:} Mean LPB. 
    Shaded regions represent the standard deviation over 20 runs.
    \label{fig:oracle_q}
    }
\end{figure}

We compare our approach with a calibration method that has an infinite budget on our synthetic data. With an unlimited budget, we can observe an unsafe event for every calibration sample and thus can employ classic conformal prediction~\citep{vovk2005algorithmic, papadopoulos2008inductive} without data re-weighting to obtain LPBs that achieve $1-\alpha=90\%$ coverage rate. Specifically, we estimate the miscoverage rate similarly to~\eqref{eq:miscoverage_estimator}, but here, we use all calibration points:
\begin{align}
\label{eq:miscoverage_estimator_all}
    \hat{\alpha}^{\text{ib}}(\tau) = \frac{1}{|\Itwo|+1}\sum_{i\in\mathcal I_2}
\mathbb{I}\bigl\{T_i<\hat q_\tau(X_i) \bigr\}
\end{align}
and construct the LPB as in~\eqref{eq:our_calibration} with the above $\hat{\alpha}^{\text{ib}}$. 

We summarize the performance of all methods in Figure~\ref{fig:infinite_budget}. The results reveal that the infinite budget method achieves the desired coverage rate with lower variability than the alternatives. Moreover, our \texttt{Optimized} approach produces LPBs comparable in size to those of the infinite-budget calibration as the budget increases, demonstrating its effectiveness in approaching optimal performance even under budget constraints.

\begin{figure}[ht]
    \centering
    \includegraphics[width=\textwidth]{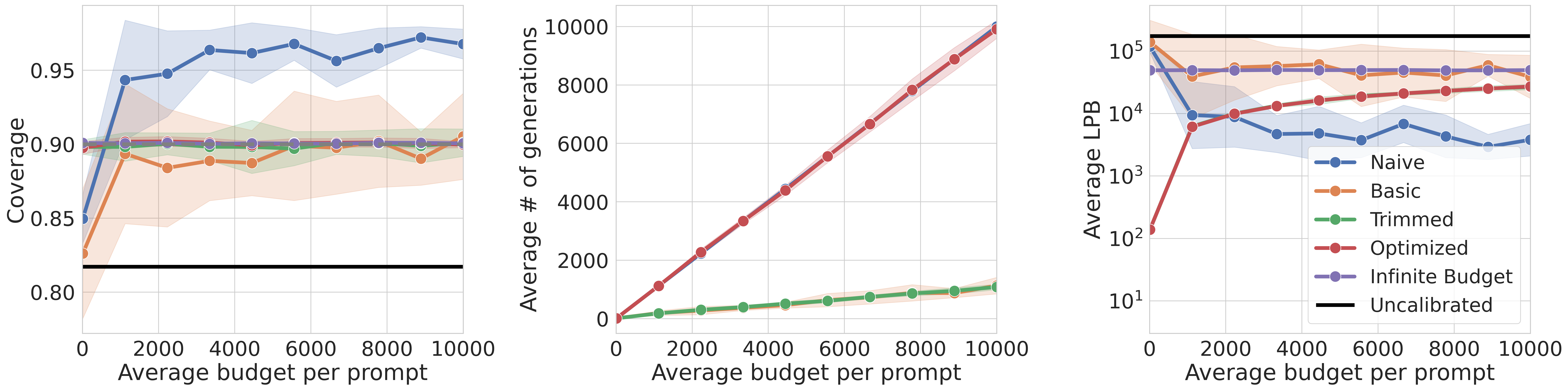}
    \caption{Synthetic data experiment with a calibration method that has an infinite budget. 
    {\bf Top:} Coverage (target 90\%; gray dashed line). 
    {\bf Bottom:} Mean LPB. 
    Shaded regions represent the standard deviation over 20 runs.
    \label{fig:infinite_budget}
    }
\end{figure}

\subsection{Real-Data Experiments}
\label{subsec:real_data_experiments}

\paragraph{Calibration Method Parameters.}
In all experiments, unless stated otherwise, we used the prior quantile $\tau_{\mathrm{prior}}=10^{-1/4}$, with trimming threshold $M$ set so that $\gamma=\max\left({|\Itwo| \cdot M}/{B}, 1\right)=2$.

\paragraph{Efficient Prompt‐Parallel Sampling.}
To efficiently generate outputs during our calibration procedures, we implement a prompt-parallel generation algorithm on top of the vLLM package~\citep{kwon2023efficient}. In each iteration, we gather a batch of $1500$ prompts ${X_i}$ along with their corresponding censoring budgets ${C_i}$, and distribute them across multiple GPUs. The pipeline iteratively performs the following steps in parallel for a batch of prompts, until all prompts in the calibration set are processed:
\begin{enumerate}
\item Generate an output $\mathcal{G}(X_i)$ using the LLM.
\item Apply the audit function $\mathrm{Audit}\bigl(X_i, \mathcal{G}(X_i)\bigr)$ to identify unsafe outputs.
\item Check whether $X_i$ has either: (i) reached the $C_i$ generations limit; or (ii) produced an output that fails the audit. If at least one of these two conditions is met, remove the prompt from the pipeline and replace it with a new prompt.
\end{enumerate}
By promptly removing any prompt that yields an unsafe output, the processing maintains high throughput and avoids idle GPU time.


\paragraph{Runtime Evaluation.}
Table~\ref{tab:runtime} presents average runtime for the \texttt{Naive} and \texttt{Optimized} calibration procedures across varying prompt budgets. While the \texttt{Naive} method can have a long runtime due to its unbounded censorship times $C_i$, it is observed to be faster than the \texttt{Optimized} method; see Table~\ref{tab:runtime}. We explain this by two key factors. First, our implementation of the \texttt{Naive} method includes an efficient runtime optimization: instead of always generating all $C_i$ samples, we terminate generation early when additional samples no longer improve the estimate of $\hat{\alpha}(\tau)$. As detailed in Section~\ref{sec:calib_impl}, this strategy yields LPBs that are equivalent to those produced by the original \texttt{Naive} procedure, while significantly reducing computation time. Second, the base model used in these experiments tends to produce low predicted quantiles $\hat{q}_\tau(X_i)$, as shown in Figure~\ref{fig:real_results}. Consequently, the effective censorship budgets $C_i$ are also low for most prompts, further limiting runtime. 

In sum, these two factors---runtime-efficient implementation and tendency to produce low predicted quantiles---explain why the reported runtime of the \texttt{Naive} is lower than the \texttt{Optimized} method in this setting.

\begin{table}[ht]
\centering
\begin{tabular}{lccc}
\toprule
Budget per prompt & Naive runtime (hours) & Optimized runtime (hours) & \# GPUs \\
\midrule
10    & 0.032 $\pm$ 0.005 & 0.063 $\pm$ 0.001 & 6 \\
25    & 0.042 $\pm$ 0.001 & 0.117 $\pm$ 0.001 & 6 \\
50    & 0.070 $\pm$ 0.006 & 0.209 $\pm$ 0.001 & 6 \\
100   & 0.124 $\pm$ 0.003 & 0.388 $\pm$ 0.003 & 6 \\
200   & 0.215 $\pm$ 0.010 & 0.589 $\pm$ 0.007 & 6 \\
300   & 0.316 $\pm$ 0.012 & 0.718 $\pm$ 0.002 & 6 \\
600   & 0.561 $\pm$ 0.016 & 0.979 $\pm$ 0.006 & 6 \\
1200  & 0.918 $\pm$ 0.051 & 1.792 $\pm$ 0.041 & 4 \\
\bottomrule
\end{tabular}
\caption{Average runtime (hours) by method and budget per prompt (mean $\pm$ SD).}
\label{tab:runtime}
\end{table}

\paragraph{Model Architecture and Training.} We employ the ModernBERT-base \citep{warner2024smarter} model. We use Adam optimizer~\citep{adam} to minimize the loss function in~\eqref{eq:agg_bce}, with a learning rate of 3e-5, and a batch size of $1500$. We train the model for 10 epochs and use early stopping based on the validation set. 




\subsection{Shared Implementation Details for Both Real and Synthetic Experiments}
\label{sec:training}

Our calibration framework applies to any machine learning model that outputs an estimated conditional quantile of the time-to-unsafe-sampling, denoted by \( \hat{q}_\tau(x) \), given the covariates \( x \). This is motivated by the fact that \( T \), the time-to-unsafe-sampling, is a Geometric random variable with parameter \( p(x) \), the probability of unsafe generation. Consequently, the conditional quantile function of \( T \) is given analytically in terms of \( p(x) \), and vice versa:
\[
q_\tau(x) = \left\lceil \frac{\log(1 - \tau)}{\log(1 - p(x))} \right\rceil
\quad \text{and} \quad
p(x) = 1 - (1 - \tau)^{1/q_\tau(x)}.
\]
The above relationship allows us to estimate either \( p(x) \) or \( q_\tau(x) \) and recover the other via a closed-form transformation.

\paragraph{Loss Function.} In our experiments, we estimate \( p(x) \) by fitting a model that minimizes the binary cross-entropy (BCE) loss over aggregate unsafe proportions. Specifically, for each prompt \( X_i \), we define the empirical success rate as \( \bar{Y}_i = \frac{1}{N} \sum_{j=1}^N Y_i^j \), where \( Y_i^j \in \{0,1\} \) indicates whether the \( j \)-th sample is unsafe. We then use the BCE loss:
\begin{equation}
\label{eq:agg_bce}
\mathrm{BCE}(\bar Y_i, \hat{p}(X_i)) = -\bigl[\bar{Y}_i \log \hat{p}(X_i) + (1 - \bar{Y}_i) \log(1 - \hat{p}(X_i))\bigr],
\end{equation}
where \( \hat{p}(X_i) \) is the model's prediction of the probability of unsafe generation. Notably, this loss is equivalent to the mean of the standard BCE loss across individual samples:
\[
\frac{1}{N}\sum_{j=1}^N \mathrm{BCE}(Y_i^j, \hat{p}(X_i)) = \mathrm{BCE}(\bar Y_i, \hat{p}(X_i)).
\]
This aggregation allows for a more computationally and memory-efficient implementation compared to the use of the vanilla BCE.

\subsubsection{Implementation Details of the \texttt{Naive} Calibration Procedure}
\label{sec:calib_impl}

In theory, the \ttnaive calibration (Algorithm~\ref{alg:calibration_naive}) can assign an unbounded censorship time per prompt, which leads to arbitrarily long running times. To avoid this, we restrict our search for the threshold \(\tau\) to the compact set \(\mathcal{T}\cap\tau_{\mathrm{prior}}\) (see Section~\ref{subsec:real}). This restriction in search space does not violate the method's validity, as it may only make it more conservative, as discussed in Section~\ref{subsec:prompt_adaptive}. This section also shows that assigning a censorship time $C_i \geq \hat{q}_\tau(X_i)$ is equivalent to drawing exactly $\hat{q}_\tau(X_i)$ generations. Conversely, setting $C_i < \hat{q}_\tau(X_i)$ amounts to not drawing any samples. Based on this observation, we implement a run-time efficient version of the \texttt{Naive} method that draws censorship times according to
\begin{equation*}
    C_i := \text{Ber}(g(X_i)) \cdot \hat{q}_{ \tau_{\text{prior}}}(X_i),
\end{equation*}
where $g(x)=\mathbb{P}(\mathrm{Geom}(\min(|\Itwo|/B, 1)\geq \hat{q}_{ \tau_{\text{prior}}}(X_i))$. It is important to note at this point that the average number of generations per prompt reported in Figure~\ref{fig:budget_effect} is the simulated number of samples that the equivalent, non-optimized \ttnaive solution would use. This implementation also implies that the \ttnaive method has a much smaller overall number of generations, leading to a faster runtime.

\subsection{Machine’s Spec}\label{sec:machine_spec}

The computational infrastructure used in the experiments are:
\begin{itemize}
    \item \textbf{CPU}: Intel(R) Xeon(R) CPU E5-2683 v4 @ 2.10GHz, Intel(R) Xeon(R) Gold 5318Y CPU @ 2.10GHz, Intel(R) Xeon(R) Gold 6336Y CPU @ 2.40GHz.
    \item \textbf{GPU}: NVIDIA A40, NVIDIA TITAN X (Pascal), NVIDIA 2080 TI, NVIDIA RTX 2060 SUPER.
    \item \textbf{OS}: Ubuntu 20.04.6.
\end{itemize}

\end{document}